\newtheorem{theorem}{Theorem}
\newtheorem{lemma}{Lemma}
\newtheorem{corollary}{Corollary}
\newcommand{\reals}{\mathbb{R}}
\newcommand{\E}{\mathbb{E}}
\newcommand{\be}{\mathbf{e}}
\newcommand{\bx}{\mathbf{x}}
\newcommand{\bw}{\mathbf{w}}
\newcommand{\bv}{\mathbf{v}}
\newcommand{\Ocal}{\mathcal{O}}
\newcommand{\norm}[1]{\|#1\|}
\newcommand{\inner}[1]{\langle#1\rangle}
\newcommand{\subsecref}[1]{Subsection~\ref{#1}}
\renewcommand{\eqref}[1]{Eq.~(\ref{#1})}
\newcommand{\lemref}[1]{Lemma~\ref{#1}}
\newcommand{\corref}[1]{Corollary~\ref{#1}}
\newcommand{\thmref}[1]{Thm.~\ref{#1}}
\title{Convergence of Stochastic Gradient Descent for PCA}
\author{Ohad Shamir\\Weizmann Institute of Science\\\texttt{ohad.shamir@weizmann.ac.il}}
\date{}
\begin{document}

\maketitle

\begin{abstract}
We consider the problem of principal component analysis (PCA) in a
streaming stochastic setting, where our goal is to find a direction of
approximate maximal variance, based on a stream of i.i.d. data points in
$\reals^d$. A simple and computationally cheap algorithm for this is
stochastic gradient descent (SGD), which incrementally updates its estimate
based on each new data point. However, due to the non-convex nature of the
problem, analyzing its performance has been a challenge. In particular,
existing guarantees rely on a non-trivial eigengap assumption on the
covariance matrix, which is intuitively unnecessary. In this paper, we
provide (to the best of our knowledge) the first eigengap-free convergence
guarantees for SGD in the context of PCA. This also partially resolves an
open problem posed in \cite{hardt2014noisy}. Moreover, under an eigengap assumption, we show that the same techniques lead to new SGD convergence guarantees with better dependence on the eigengap.
\end{abstract}

\section{Introduction}

Principal component analysis (PCA)
\cite{pearson1901liii,hotelling1933analysis} is a fundamental tool in data
analysis and visualization, designed to find the subspace of largest variance
in a given dataset (a set of points in Euclidean space). We focus on a simple
stochastic setting, where the data $\bx_1,\bx_2,\ldots\in \reals^d$ is
assumed to be drawn i.i.d. from an unknown underlying distribution, and our
goal is to find a direction of approximately maximal variance. This can be
written as the optimization problem
\begin{equation}\label{eq:objPCA}
\min_{\bw:\norm{\bw}=1} -\bw^\top \E[\bx\bx^\top] \bw,
\end{equation}
or equivalently, finding an approximate leading eigenvector of the covariance
matrix $\E[\bx\bx^\top]$.

The conceptually simplest method for this task, given $m$ sampled points
$\bx_1,\ldots,\bx_m$, is to construct the empirical covariance matrix
$\frac{1}{m}\sum_{i=1}^{m}\bx_i\bx_i^\top$, and compute its leading
eigenvector by an eigendecomposition. Based on concentration of measure
arguments, it is not difficult to show that this would result in an
$\Ocal(\sqrt{1/m})$-optimal solution to \eqref{eq:objPCA}. Unfortunately, the
runtime of this method is $\Ocal(md^2+d^3)$. In large-scale applications,
both $m$ and $d$ might be huge, and even forming the $d\times d$ covariance
matrix, let alone performing an eigendecomposition, can be computationally
prohibitive. A standard alternative to exact eigendecomposition is iterative
methods, such as power iterations or the Lanczos method, which require
performing multiple products of a vector with the empirical covariance
matrix. Although this doesn't require computing and storing the matrix
explicitly, it still requires multiple passes over the data, whose number may
scale with eigengap parameters of the matrix or the target accuracy
\cite{kuczynski1992estimating,musco2015stronger}. Recently, new randomized
algorithms for this problem were able to significantly reduce the required
number of passes, while maintaining the ability to compute high-accuracy
solutions \cite{shamir2015stochastic,shamir2015fast,GarHaz15,JiKaMuNeSi15}.

In this work, we consider the efficacy of algorithms which perform a
\emph{single pass} over the data, and in particular, stochastic gradient
descent (SGD). For solving \eqref{eq:objPCA}, SGD corresponds to initializing
at some unit vector $\bw_0$, and then at each iteration $t$ perform a
stochastic gradient step with respect to $\bx_t\bx_t^\top$ (which is an
unbiased estimate of $\E[\bx\bx^\top]$), followed by a projection to the unit
sphere:
\[
\bw_{t} := (I+\eta\bx_t\bx_t^\top)\bw_{t-1}~~,~~\bw_t:=\bw_t/\norm{\bw_t}.
\]
Here, $\eta$ is a step size parameter. In the context of PCA, this is also
known as Oja's method \cite{oja1982simplified,oja1985stochastic}. The
algorithm is highly efficient in terms of memory and runtime per iteration,
requiring storage of a single $d$-dimensional vector, and performing only
vector-vector and a vector-scalar products in each iteration.

In the world of convex stochastic optimization and learning, SGD has another
remarkable property: Despite it being a simple, one-pass algorithm, it is
essentially (worst-case) statistically optimal, attaining the same
statistical estimation error rate as exact empirical risk minimization
\cite{bousquet2008tradeoffs,shalev2009stochastic,shalev2014understanding}.
Thus, it is quite natural to ask whether SGD also performs well for the PCA
problem in \eqref{eq:objPCA}, compared to statistically optimal but
computationally heavier methods.

The study of SGD (or variants thereof) for PCA has gained interest in recent
years, with some notable examples including
\cite{ACLS12,balsubramani2013fast,arora2013stochastic,mitliagkas2013memory,hardt2014noisy,de2015global,JiKaMuNeSi15}.
While experimentally SGD appears to perform reasonably well, its theoretical
analysis has proven difficult, due to the non-convex nature of the objective
function in \eqref{eq:objPCA}. Remarkably, despite this non-convexity,
finite-time convergence guarantees have been obtained under an eigengap
assumption -- namely, that the difference between the largest and 2nd-largest
eigenvalues of $\E[\bx\bx^\top]$ are separated by some fixed value
$\lambda>0$. For example, \cite{de2015global} require
$\Ocal(d/\lambda^2\epsilon)$ iterations to ensure with high probability that
one of the iterates is $\epsilon$-optimal. \cite{JiKaMuNeSi15} require $\Ocal(1/\lambda^2+1/\lambda\epsilon)$ iterations, provided we begin close enough to an optimal solution. 

Nevertheless, one may ask whether the eigengap assumption is indeed
necessary, if our goal is simply to find an approximately optimal solution of  \eqref{eq:objPCA}. Intuitively, if $\E[\bx\bx^\top]$ has two equal (or near equal) top eigenvalues, then we may still expect to get a solution which lies close to the subspace of these two top eigenvalues, and approximately minimizes \eqref{eq:objPCA}, with the runtime not dependent on any eigengap. Unfortunately, existing results tell us nothing about this regime, and not just for minor technical reasons: These results are based on tracking the geometric convergence of the SGD iterates $\bw_t$ to a leading
eigenvector of the covariance matrix. When there is no eigengap, there is also no single eigenvector to converge to, and such a geometric approach does not seem to work. Getting an eigengap-free analysis has also been posed as an open problem in \cite{hardt2014noisy}. We note that while there are quite a
few other single-pass, eigengap-free methods for this problem, such as
\cite{warmuth2006online,warmuth2008randomized,nie2013online,boutsidis2015online,GaHaMa15,kotlowski2015pca},
their memory and runtime-per iteration requirements are much higher than SGD, often $\Ocal(d^2)$ or worse.

In this work, we study the convergence of SGD for PCA, using a different technique that those employed in previous works, with the following main results:
\begin{itemize}
	\item We provide the first (to the best of our knowledge) SGD convergence guarantee  which does not pose an eigengap assumption. Roughly speaking, we prove
	that if the step size is chosen appropriately, then after $T$ iterations
	starting from random initialization, with positive probability, SGD returns an
	$\tilde{\Ocal}(\sqrt{p/T})$-optimal\footnote{Throughout, we use $\Ocal$,
	$\Omega$ to hide constants, and $\tilde{\Ocal}$, $\tilde{\Omega}$ to hide
	constants and logarithmic factors.} solution of \eqref{eq:objPCA}, where $p$ is a parameter depending on how the algorithm is initialized: 
	\begin{itemize}
	    \item If the algorithm is initialized from a warm-start point $\bw_0$ such that $\frac{1}{\inner{\bv,\bw_0}^2}\leq \Ocal(1)$ for some leading eigenvector $\bv$ of the covariance matrix, then $p=\Ocal(1)$.
		\item Under uniform random initialization on the unit Euclidean sphere, $p=\Ocal(d)$, where $d$ is the dimension.
		\item Using a more sophisticated initialization (requiring the usage of the first $\Ocal(d)$ iterations, but no warm-start point), $p=\tilde{\Ocal}(n_A)$, where $n_A$ is the \emph{numerical rank} of the covariance matrix. The numerical rank is a relaxation of the standard notion of rank, is always at most $d$ and can be considered a constant under some mild assumptions.  
   \end{itemize}
   	\item In the scenario of a positive eigengap $\lambda>0$, and using a similar proof technique, we prove an SGD convergence guarantee of $\Ocal(p/\lambda T)$ (where $p$ is as above) with positive probability. This guarantee is optimal in terms of dependence on $T,\lambda$, and in particular, has better dependence on $\lambda$ compared to all previous works on SGD-like methods we are aware of ($1/\lambda$ as opposed to $1/\lambda^2$). 
\end{itemize}
Unfortunately, a drawback of our guarantees is that they only hold with 	rather low probability: $\Omega(1/p)$, which can be small if $p$ is large. Formally, this can be overcome by repeating the algorithm $\tilde{\Ocal}(p)$ times, which ensures that with high probability, at least one of the outputs will be close to optimal. However, we suspect that these low probabilities are an artifact of our proof technique, and resolving it is left to future work.

\section{Setting}

We use bold-faced letters to denote vectors, and capital letters to denote
matrices. Given a matrix $M$, we let $\norm{M}$ denote its spectral norm, and
$\norm{M}_F$ its Frobenius norm.

We now present the formal problem setting, in a somewhat more general way
than the PCA problem considered earlier. Specifically, we study the problem
of solving
\begin{equation}\label{eq:obj}
\min_{\bw\in\reals^d:\norm{\bw}=1} -\bw^\top A \bw,
\end{equation}
where $d>1$ and $A$ is a positive semidefinite matrix, given access to a
stream of i.i.d. positive semidefinite matrices $\tilde{A}_t$ where
$\E[\tilde{A}_t]=A$ (e.g. $\bx_t\bx_t^\top$ in the PCA case). Notice that the
gradient of \eqref{eq:obj} at a point $\bw$ equals $2A\bw$, with an unbiased
stochastic estimate being $2\tilde{A}_t\bw$. Therefore, applying SGD to
\eqref{eq:obj} reduces to the following: Initialize at some unit-norm vector
$\bw_0$, and for $t=1,\ldots,T$, perform $\bw_{t} = (I+\eta
\tilde{A}_t)\bw_{t-1},\bw_t=\bw_t/\norm{\bw_t}$, returning $\bw_T$. In fact,
for the purpose of the analysis, it is sufficient to consider a formally
equivalent algorithm, which only performs the projection to the unit sphere
at the end:
\begin{itemize}
  \item Initialize by picking a unit norm vector $\bw_0$
  \item For $t=1,\ldots,T$, perform $\bw_{t} = (I+\eta
      \tilde{A}_t)\bw_{t-1}$
  \item Return $\frac{\bw_T}{\norm{\bw_T}}$
\end{itemize}
It is easy to verify that the output of this algorithm is mathematically
equivalent to the original SGD algorithm, since the stochastic gradient step
amounts to multiplying $\bw_{t-1}$ by a matrix independent of $\bw_{t-1}$,
and the projection just amounts to re-scaling. In both cases, we can write
the algorithm's output in closed form as
\[
\frac{\left(\prod_{t=T}^{1}(I+\eta \tilde{A}_t)\right)\bw_0}{\left\|\left(\prod_{t=T}^{1}(I+\eta \tilde{A}_t)\right)\bw_0\right\|}.
\]

\section{Convergence Without an Eigengap Assumption}

Our main result is the following theorem, which analyzes the performance of
SGD for solving \eqref{eq:obj}.

\begin{theorem}\label{thm:main}
  Suppose that
  \begin{itemize}
    \item For some leading eigenvector $\bv$ of $A$,
        $\frac{1}{\inner{\bv,\bw_0}^2} \leq p$ for some $p$ (assumed to be
        $\geq 8$ for simplicity).
    \item For some $b\geq 1$, both $\frac{\norm{\tilde{A}_t}}{\norm{A}}$
        and $\frac{\norm{\tilde{A}_t-A}}{\norm{A}}$ are at most $b$ with
        probability $1$.
  \end{itemize}
  If we run the algorithm above for $T$ iterations with
  $\eta=\frac{1}{b\sqrt{pT}}$ (assumed to be $\leq 1$), then with probability at least
  $\frac{1}{c p}$, the returned $\bw$ satisfies
  \[
  1-\frac{\bw^\top A \bw}{\norm{A}} ~\leq~ c'\frac{\log(T)b\sqrt{p}}{\sqrt{T}},
  \]
  where $c,c'$ are positive numerical constants.
\end{theorem}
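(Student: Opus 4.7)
Rescale so that $\|A\|=1$ and set $B:=I-A\succeq 0$. Fix the leading eigenvector $\bv$ given by the hypothesis; then $B\bv=0$ and $a_t:=\bv^\top\bw_t$ satisfies $a_0^2\geq 1/p$. Writing $\bw_t=a_t\bv+\bu_t$ with $\bu_t\perp\bv$, one gets $\bw_T^\top B\bw_T=\bu_T^\top B\bu_T$ and $\|\bw_T\|^2\geq a_T^2$, so the objective gap is bounded by $\bw_T^\top B\bw_T/a_T^2$. The plan is to exhibit, with probability $\Omega(1/p)$, a joint event ``numerator small and denominator large''.

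\paragraph{Numerator.} From
\[
\E\bigl[\bw_t^\top B\bw_t\mid\bw_{t-1}\bigr] \;=\; \bw_{t-1}^\top\bigl(B+2\eta\,AB+\eta^2\,\E[\tilde A_tB\tilde A_t]\bigr)\bw_{t-1},
\]
the PSD inequalities $AB\preceq B$ (all eigenvalues of $A$ lie in $[0,1]$) and $\tilde A_tB\tilde A_t\preceq\|B\|\tilde A_t^2\preceq b\tilde A_t$, giving $\E[\tilde A_tB\tilde A_t]\preceq bA$, reduce the numerator to a tractable scalar recurrence paired with an analogous bound $\E[\|\bw_t\|^2]\leq(1+2\eta+\eta^2b)^t$. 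A naive Markov step on this scalar recurrence only yields an $\Ocal(p)$ ratio; to reach the sharp $\tilde{\Ocal}(\sqrt{p/T})$ rate I propagate the recurrence in the eigenbasis of $A$. With $\lambda_i=1-\mu_i$, direction $i$ contributes at most $\mu_i(1+\eta)^{2T}\exp\bigl(-2\eta T\mu_i/(1+\eta)\bigr)\alpha_{i,0}^2$ to the numerator, and combining $\max_{\mu\geq 0}\mu e^{-c\mu}=1/(ce)$ with the saturation level $\mu\sim\log(p)/(\eta T)$ at which $e^{-c\mu}$ drops below the $1/p$ floor on the denominator yields a uniform bound $\tilde{\Ocal}(1/(\eta T))\cdot(1+\eta)^{2T}$. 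With $\eta=1/(b\sqrt{pT})$ this equals $\tilde{\Ocal}(b\sqrt{p/T})\cdot(1+\eta)^{2T}$, and the $\log T$ in the statement traces to this saturation level.

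\paragraph{Denominator and combining.} Writing $a_t=(1+\eta)a_{t-1}+\eta(a_{t-1}\xi_t+\zeta_t)$ with $\xi_t=\bv^\top(\tilde A_t-A)\bv$ and $\zeta_t=\bv^\top\tilde A_t\bu_{t-1}$ both mean-zero given $\bw_{t-1}$ and bounded by $b$, resp.\ $b\|\bu_{t-1}\|$, the rescaled process $M_t:=a_t/(1+\eta)^t$ is a martingale with $\E[M_T]=a_0$. Standard quadratic- and quartic-variation bounds (the latter of Burkholder type) yield $\E[M_T^2]=a_0^2+\Ocal(\eta^2b^2T)=a_0^2+\Ocal(1/p)$ and $\E[M_T^4]\lesssim a_0^4+1/p^2$, so Paley--Zygmund applied to $a_T^2$ together with $a_0^2\geq 1/p$ produces an event of the form $\{a_T^2\geq c_1(1+\eta)^{2T}/p\}$ of probability at least $\Omega(1/p)$. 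A Markov bound on the numerator (with constant chosen so that the Markov failure probability is a small fraction of $1/p$) together with a union bound finishes the proof. The main obstacle is the numerator step: propagating the per-eigendirection ``$\mu e^{-c\mu}$'' bookkeeping through the stochastic recurrence rather than the noiseless power iteration, so that both the $\sqrt{p/T}$ rate and the $\log T$ factor survive. I expect this is cleanest by lifting to the matrix recurrence for $\E[\bw_t\bw_t^\top]$, which decouples on the eigenbasis of $A$ up to noise cross-terms controllable by the same PSD inequalities used above.
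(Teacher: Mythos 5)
Your decomposition into a numerator $\bw_T^\top B\bw_T=\bu_T^\top B\bu_T$ and a denominator $a_T^2$ is natural, and the individual ingredients (the PSD manipulations, the martingale $M_t=a_t/(1+\eta)^t$, the Paley--Zygmund step, and the $\mu e^{-c\mu}$ bookkeeping) are all sound. However, the way you propose to \emph{combine} them loses a factor of $p$ in the final error bound, so the plan as stated does not reach the rate in the theorem. Concretely: your eigenbasis bound gives $\E[\bw_T^\top B\bw_T]\lesssim(1+\eta)^{2T}/(\eta T)$, and Paley--Zygmund gives $a_T^2\gtrsim(1+\eta)^{2T}/p$ on a constant-probability event. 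A Markov bound on the numerator at threshold $\epsilon\cdot(1+\eta)^{2T}/p$ has failure probability $\lesssim \frac{\E[\bw_T^\top B\bw_T]}{\epsilon(1+\eta)^{2T}/p}\approx\frac{p}{\eta T\,\epsilon}$; making this a small constant forces $\epsilon\gtrsim p/(\eta T)=bp\sqrt{p/T}$, i.e.\ $\tilde{\Ocal}(bp^{3/2}/\sqrt{T})$, whereas the theorem claims $\tilde{\Ocal}(b\sqrt{p}/\sqrt{T})$. Making the Markov failure probability $\ll 1/p$ (rather than $\ll 1$) only worsens this by another factor of $p$. The issue is structural: the bad-direction contribution $\Ocal((1+\eta)^{2T}/(\eta T))$ is not small relative to the denominator floor $(1+\eta)^{2T}/p$ when $T\approx b^2p^2\log^2 T$ (the relevant regime).

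The paper sidesteps this by never separating numerator from denominator. It works with the single quantity $V_T=\bw_T^\top((1-\epsilon)I-A)\bw_T$, whose sign encodes ``gap $\leq\epsilon$''. The $\epsilon$-shift in the quadratic form is the key missing trick: for directions with eigenvalue $s_j$, the weight $(1+\eta s_j)^{2T}(1-\epsilon-s_j)$ is \emph{negative} near $s_j=1$ and is maximized near $s_j\approx 1-\epsilon$ with value $\approx(1+\eta(1-\epsilon))^{2T}/(\eta T)$, which is suppressed by $e^{-2\eta T\epsilon}=T^{-\Theta(c)}$ relative to $(1+\eta)^{2T}$ once $\epsilon\eta T=\Theta(\log T)$; this is precisely why the $\log T$ factor appears in the final bound, not via the saturation-level heuristic you sketch. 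Combined with the $-(1+\eta)^{2T}\epsilon/p$ contribution from direction $\bv$, this gives $\E[V_T]\leq -\Omega((1+\eta)^{2T}\epsilon/p)$ (Lemma~\ref{lem:wrw}). Then, instead of Markov plus Paley--Zygmund plus a union bound, the paper proves a high-probability \emph{lower} bound $V_T\geq -\tilde{\Ocal}((1+\eta)^{2T}\epsilon)$ via a martingale control of $\log\norm{\bw_T}^2$ (Lemma~\ref{lem:tail}), and uses a reverse-Markov-type inequality (Lemma~\ref{lem:inverse}: a nonnegative variable with mean $\geq\alpha$ and a subgaussian upper tail exceeds $\alpha/2$ with probability $\Omega(\alpha)$) to convert these into $\Pr(V_T\leq 0)=\Omega(1/p)$. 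If you want to rescue your two-piece decomposition you would need to both (i) inject the $\epsilon$-shift so the cross-direction mass is exponentially suppressed, and (ii) replace the Markov/union-bound step by something that exploits the correlation between numerator and denominator, which in effect brings you back to analyzing a single $V_T$-like quantity.
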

The proof and an outline of its main ideas appears in \subsecref{subsec:proofmain}
below. Note that this is a \emph{multiplicative} guarantee on the
suboptimality of \eqref{eq:obj}, since we normalize by $\norm{A}$, which is
the largest magnitude \eqref{eq:obj} can attain. By multiplying both sides by
$\norm{A}$, we can convert this to an additive bound of the form
\[
\norm{A}-\bw^\top A \bw ~\leq~ c'\frac{\log(T)b'\sqrt{p}}{\sqrt{T}},
\]
where $b'$ is a bound on
$\max\left\{\norm{\tilde{A}_t},\norm{\tilde{A}_t-A}\right\}$. Also, note that
the choice of $\eta$ in the theorem is not crucial, and similar bounds (with
different $c,c'$) can be shown for other $\eta=\Theta(1/b\sqrt{pT})$.

The value of $p$ in the theorem depends on how the initial point $\bw_0$ is
chosen. One possibility, of course, is if we can initialize the algorithm from a ``warm-start'' point $\bw_0$ such that $\frac{1}{\inner{\bv,\bw_0}^2}\leq\Ocal(1)$, in which case the bound in the theorem becomes $\Ocal(\log(T)/\sqrt{T})$ with probability $\Omega(1)$. Such a $\bw_0$ may be given by some other algorithm, or alternatively, if we are interested in analyzing SGD in the regime where it is close to one of the leading eigenvectors. 

Of course, such an assumption is not always relevant, so let us turn to consider the performance without such a ``warm-start''. For example, the simplest and most common way to initialize $\bw_0$
is by picking it uniformly at random from the unit sphere. In that case, for any $\bv$, $\inner{\bv,\bw_0}^2=\Theta(1/d)$ with high constant probability\footnote{One way to see this is by  assuming w.l.o.g. that $\bv=\be_1$ and noting that the distribution of $\bw_0$ is the same as $\bw/\norm{\bw}$ where $\bw$ has a standard Gaussian distribution, hence $\inner{\bv,\bw_0}^2=w_1^2/\sum_j w_j^2$, and by using standard concentration tools it can be shown that the numerator is $\Theta(1)$ and the denominator is $\Theta(d)$ with high probability.}, so the theorem above
applies with $p=\Ocal(d)$:

\begin{corollary}\label{cor:vanilla}
If $\bw_0$ is chosen uniformly at random from the unit sphere in $\reals^d$,
then \thmref{thm:main} applies with $p=\Ocal(d)$, and the returned
$\bw$ satisfies, with probability at least $\Omega(1/d)$,
  \[
  1-\frac{\bw^\top A \bw}{\norm{A}} ~\leq~ \Ocal\left(\frac{\log(T)b\sqrt{d}}{\sqrt{T}}\right),
  \]
\end{corollary}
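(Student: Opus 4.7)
The plan is to essentially unpack the footnote: reduce the spherical initialization to a Gaussian calculation, show that with constant probability the random $\bw_0$ satisfies $1/\langle \bv,\bw_0\rangle^2 \leq O(d)$, and then invoke \thmref{thm:main} with $p = O(d)$, paying only a constant factor in the success probability.

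First I would fix a leading eigenvector $\bv$ of $A$. By the rotational invariance of the uniform distribution on the sphere, I can assume without loss of generality that $\bv = \be_1$, so that $\langle \bv,\bw_0\rangle^2 = (\bw_0)_1^2$. Representing $\bw_0$ as $\bg/\|\bg\|$ for $\bg \sim \Ncal(0,I_d)$, I get
\[
\langle \bv,\bw_0\rangle^2 ~=~ \frac{g_1^2}{\sum_{j=1}^d g_j^2}.
\]
The denominator is a $\chi^2_d$ random variable, so by a standard Chernoff-type tail bound, $\sum_j g_j^2 \leq 2d$ with probability at least $1 - e^{-\Omega(d)}$. For the numerator, $g_1 \sim \Ncal(0,1)$, so $g_1^2 \geq c_1$ for some absolute constant $c_1 > 0$ with probability at least, say, $1/2$. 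Intersecting these two events (which are events about independent pieces of the Gaussian coordinates only when considered appropriately; really, both involve $g_1$, but the denominator event is essentially driven by $g_2,\ldots,g_d$ for large $d$), I obtain that with some absolute constant probability $p_0 > 0$,
\[
\frac{1}{\langle \bv,\bw_0\rangle^2} ~\leq~ \frac{2d}{c_1} ~=~ Cd
\]
for a numerical constant $C$. (To be safe I would also enforce $Cd \geq 8$ to match the hypothesis of \thmref{thm:main}; this is fine for $d$ beyond a small constant, and the low-dimensional cases are trivial up to constants.)

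Next, conditioned on this event, all hypotheses of \thmref{thm:main} are satisfied with $p = Cd$. Applying that theorem (whose stochastic guarantee is over the independent draws of $\tilde{A}_1,\ldots,\tilde{A}_T$, conditional on $\bw_0$) gives that with probability at least $\frac{1}{cCd}$ over the stochastic matrices, the output obeys
\[
1 - \frac{\bw^\top A \bw}{\|A\|} ~\leq~ c'\frac{\log(T)\, b\sqrt{Cd}}{\sqrt{T}} ~=~ O\!\left(\frac{\log(T)\, b\sqrt{d}}{\sqrt{T}}\right).
\]
Combining the two stages by the law of total probability, the overall probability of success is at least $p_0 \cdot \frac{1}{cCd} = \Omega(1/d)$, which is exactly the stated guarantee.

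There is no real obstacle here: the only thing to be mildly careful about is making the two Gaussian tail estimates quantitative enough to produce a clean absolute constant $C$, and to respect the technical assumption $p \geq 8$ in \thmref{thm:main}. Everything else is a direct application of the theorem with the computed value of $p$.
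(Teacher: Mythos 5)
Your argument is correct and matches the paper's own reasoning, which the author relegates to a footnote: reduce the spherical initialization to a Gaussian via rotational invariance, show $g_1^2 = \Theta(1)$ and $\|\bg\|^2 = \Theta(d)$ hold simultaneously with constant probability, deduce $p = O(d)$, and then pay that constant factor against the $\Omega(1/p)$ guarantee from Theorem~1 via the law of total probability. The only added care you supply (handling the mild dependence between the two Gaussian events by a union bound, and checking the $p \geq 8$ hypothesis) is exactly the kind of bookkeeping the footnote leaves implicit, so there is no real deviation from the paper's route.
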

While providing some convergence guarantee, note that the probability of
success is low, scaling down linearly with $d$. One way to formally solve
this is to repeat the algorithm $\Omega(d)$ times, which ensures that with
high probability, at least one output will succeed (and finding it can be
done empirically by testing the outputs on a validation set). However, it
turns out that by picking $\bw_0$ in a smarter way, we can get a bound where
the $d$ factors are substantially improved.

Specifically, we consider the following method, parameterized by number of iterations $T_0$, which are implemented before the main algorithm above:
\begin{itemize}
  \item Sample $\bw$ from a standard Gaussian distribution on $\reals^d$
  \item Let $\bw_0=0$.
  \item For $t=1,\ldots,T_0$, let $\bw_0:=\bw_0+\frac{1}{T_0}\tilde{A}_t\bw$
  \item Return $\bw_0:=\frac{\bw_0}{\norm{\bw_0}}$.
\end{itemize}
Essentially, instead of initializing from a random point $\bw$, we initialize from
\[
\frac{\tilde{A}\bw}{\norm{\tilde{A}\bw}}~~,~~ \text{where}~~ \tilde{A}=\frac{1}{T_0}\sum_{t=1}^{T_0}\tilde{A}_t.
\]
Since $\tilde{A}$ is a mean of $T_0$ random matrices with mean $A$, this
amounts to performing a single approximate power iteration. Recently, it was
shown that a single exact power iteration can improve the starting point of
stochastic methods for PCA \cite{shamir2015fast}. The method above extends
this idea to a purely streaming setting, where we only have access to
stochastic approximations of $A$.

The improved properties of $\bw_0$ with this initialization is formalized in the following lemma (where $\norm{A}_F$ denotes the Frobenius norm of $A$):
\begin{lemma}\label{lem:improvestart}
  The following holds for some numerical constants $c,c'>0$: For $\bw_0$ as defined above, if $T_0\geq cdb^2\log(d)$, then with probability at least $\frac{7}{10}-\frac{2}{d}-\exp(-d/8)$,
  \[
  \frac{1}{\inner{\bv,\bw_0}^2}\leq c'\log(d)n_A,
  \]
  where $n_A = \frac{\norm{A}_{F}^2}{\norm{A}^2}$ is the numerical rank of $A$.
\end{lemma}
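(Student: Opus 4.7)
The plan is to work with the closed form
\[
\langle \bv,\bw_0\rangle^2 \;=\; \frac{\langle \bv,\tilde A\bw\rangle^{2}}{\|\tilde A\bw\|^{2}}, \qquad \tilde A=\frac{1}{T_0}\sum_{t=1}^{T_0}\tilde A_t,
\]
and show that, with constant probability, the numerator is of order $\|A\|^{2}$ while the denominator is of order $\|A\|_F^{2}$, which yields the desired bound $1/\langle\bv,\bw_0\rangle^{2}=\tilde\Ocal(n_A)$. Write $E:=\tilde A-A$ and $\lambda_1:=\|A\|$.

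First I would control $E$ in operator norm. Since $\|\tilde A_t-A\|\leq b\|A\|$ almost surely, a matrix Bernstein (or matrix Hoeffding) inequality gives $\|E\|\leq \Ocal\!\left(b\|A\|\sqrt{\log(d)/T_0}\right)$ with probability at least $1-1/d$. Plugging in $T_0\geq cdb^{2}\log(d)$ with $c$ large enough, this yields $\|E\|\leq \|A\|/\sqrt{d}$. I would also record the standard chi-square tail $\|\bw\|^{2}\leq 2d$ with probability $\geq 1-\exp(-d/8)$.

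Next, for the numerator, I would decompose
\[
\langle \bv,\tilde A\bw\rangle \;=\; \lambda_1\langle \bv,\bw\rangle + \langle E^{\!\top}\!\bv,\bw\rangle.
\]
The first term is $\Ncal(0,\lambda_1^{2})$; by Gaussian anti-concentration, $|\langle\bv,\bw\rangle|\geq \Omega(1)$ with constant probability (say $9/10$). The second is $\Ncal(0,\|E^{\!\top}\bv\|^{2})$ with $\|E^{\!\top}\bv\|\leq\|E\|\leq\|A\|/\sqrt d$, so it is $\Ocal(\|A\|/\sqrt d)$ with high probability and is therefore dominated by the first term for $d$ large. Thus $|\langle\bv,\tilde A\bw\rangle|\geq \Omega(\|A\|)$ with constant probability. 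For the denominator I would use $\|\tilde A\bw\|^{2}\leq 2\|A\bw\|^{2}+2\|E\bw\|^{2}$; since $\E\|A\bw\|^{2}=\mathrm{tr}(A^{2})=\|A\|_F^{2}$, Markov gives $\|A\bw\|^{2}\leq \Ocal(\|A\|_F^{2})$ with probability $9/10$, while $\|E\bw\|^{2}\leq\|E\|^{2}\|\bw\|^{2}\leq (\|A\|^{2}/d)\cdot 2d=2\|A\|^{2}\leq 2\|A\|_F^{2}$. Hence $\|\tilde A\bw\|^{2}\leq \Ocal(\|A\|_F^{2})$ on the good event.

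Taking the ratio and union-bounding over the few bad events (matrix Bernstein failure, Gaussian anti-concentration failure on $\langle\bv,\bw\rangle$, Markov failure on $\|A\bw\|$, chi-square failure on $\|\bw\|$) gives the advertised success probability $\tfrac{7}{10}-\tfrac{2}{d}-\exp(-d/8)$ and the conclusion $1/\langle\bv,\bw_0\rangle^{2}\leq \Ocal(n_A)$; a single $\log d$ factor enters through the matrix concentration bound on $\|E\|$ (which is applied at confidence $1/d$), accounting for the $c'\log(d)n_A$ in the statement. The main technical obstacle is getting $\|E\|$ small enough that the noise term $\langle E^{\!\top}\bv,\bw\rangle$ does not swamp the signal $\lambda_1\langle\bv,\bw\rangle$; this is what forces the scaling $T_0=\Omega(db^{2}\log d)$, and matches the intuition that a single (approximate) power iteration suffices to align the initial direction with the top eigenspace up to a factor of $n_A$ rather than $d$.
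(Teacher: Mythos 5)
Your proof proposal is correct and follows essentially the same route as the paper's: decompose $\tilde A=A+E$ with $E=\tilde A-A$, use matrix concentration to make $\|E\|$ small (forcing $T_0=\Omega(db^2\log d)$), bound $\|\bw\|^2\le 2d$ and $|\langle\bv,\bw\rangle|\ge\Omega(1)$ by Gaussian tail/anti-concentration, and union-bound. One genuine and actually advantageous difference: to upper bound the denominator you apply Markov to $\E\|A\bw\|^2=\mathrm{tr}(A^2)=\|A\|_F^2$, getting $\|A\bw\|^2\le O(\|A\|_F^2)$ with constant probability, whereas the paper bounds $\|A\bw\|^2=\sum_i s_i^2 w_i^2\le(\max_i w_i^2)\|A\|_F^2$ and pays for $\max_i w_i^2=O(\log d)$. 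Consequently your argument yields the sharper conclusion $1/\langle\bv,\bw_0\rangle^2\le O(n_A)$, without the $\log d$ factor. Your closing remark that the $\log d$ in the lemma's statement ``enters through the matrix concentration bound'' is therefore mistaken on both counts: in your argument that $\log d$ is cancelled against the $\log d$ in $T_0$, and in the paper's proof the surviving $\log d$ actually comes from the $\max_i w_i^2\le 18\log d$ bound on the numerator of the ratio, not from the concentration of $\|E\|$. The paper also bounds the cross term $|\langle\bv,(\tilde A-A)\bw\rangle|$ deterministically by $\|E\|\,\|\bw\|$ (which suffices once $c$ is large enough), while you observe that conditionally on $E$ it is Gaussian with standard deviation $\le\|E\|$, which is a slightly sharper but not necessary step.
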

The proof is provided in \subsecref{subsec:prooflemimprovestart}. 
Combining this with \thmref{thm:main}, we immediately get the following corollary:
\begin{corollary}\label{cor:smart}
If $\bw_0$ is initialized as described above, then \thmref{thm:main} applies with $p=\Ocal(\log(d)n_A)$, and the returned
$\bw$ satisfies, with probability at least $\Omega(1/n_A\log(d))$,
  \[
  1-\frac{\bw^\top A \bw}{\norm{A}} ~\leq~ \Ocal\left(\frac{\log(T)b\sqrt{\log(d)n_A}}{\sqrt{T}}\right),
  \]
\end{corollary}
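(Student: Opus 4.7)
The plan is to combine Lemma \ref{lem:improvestart} with Theorem \ref{thm:main} via a sample-splitting argument, exploiting the fact that the $T_0$ samples used for the initialization phase are disjoint from (hence independent of) the $T$ samples driving the subsequent SGD iterations. First I would let $\Ecal_1$ denote the event that $\frac{1}{\inner{\bv,\bw_0}^2} \leq c'\log(d)n_A$, which is precisely the conclusion of \lemref{lem:improvestart}; for $d$ large enough, its probability is at least $\frac{7}{10}-\frac{2}{d}-\exp(-d/8) \geq \frac{1}{2}$ (for smaller $d$ the claim is trivial since $n_A \geq 1$ and we can absorb constants). Next I would condition on $\Ecal_1$: since $\bw_0$ depends only on samples $\tilde A_1,\ldots,\tilde A_{T_0}$ while the main SGD loop in \thmref{thm:main} uses a fresh i.i.d. stream, the conditional distribution of the SGD run is unchanged, and \thmref{thm:main} applies with the deterministic parameter $p = c'\log(d)n_A$ (taking $p \geq 8$ by inflating the constant if necessary).

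Applying \thmref{thm:main} conditionally on $\Ecal_1$, with $\eta = 1/(b\sqrt{pT})$, the probability (over the SGD samples) that the returned $\bw$ satisfies
\[
1-\frac{\bw^\top A \bw}{\norm{A}} \leq c'\frac{\log(T)b\sqrt{p}}{\sqrt{T}} = \Ocal\!\left(\frac{\log(T)b\sqrt{\log(d)n_A}}{\sqrt{T}}\right)
\]
is at least $\frac{1}{cp} = \Omega(1/(n_A\log(d)))$. Multiplying by $\Pr(\Ecal_1) = \Omega(1)$ using independence of the two sample batches yields an unconditional success probability of $\Omega(1/(n_A\log d))$, which is exactly the claim of the corollary.

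There is essentially no nontrivial obstacle here: the entire content of the proof is Lemma \ref{lem:improvestart} and Theorem \ref{thm:main}, and the only thing to verify is that chaining them is legitimate. The one point worth double-checking is the independence claim and the resulting right to plug the random quantity $p$ from \lemref{lem:improvestart} into \thmref{thm:main} as if it were deterministic; this is handled cleanly by conditioning on $\Ecal_1$ and using that the SGD samples $\tilde A_{T_0+1},\ldots,\tilde A_{T_0+T}$ are drawn fresh. A minor bookkeeping point is aligning constants so that the condition $p \geq 8$ in \thmref{thm:main} holds on $\Ecal_1$, but this costs only a constant factor in $c'$ and does not affect the asymptotic bound.
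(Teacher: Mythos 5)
Your proof is correct and takes essentially the same route as the paper, which states only that the corollary follows by combining Lemma~\ref{lem:improvestart} with Theorem~\ref{thm:main}. The sample-splitting and conditioning argument you give is the natural way to make that chaining rigorous, and the points you flag (independence of the two sample batches, absorbing the constant success probability of the initialization phase into the $\Omega(\cdot)$, and enforcing $p\ge 8$ by inflating the constant) are exactly the bookkeeping details the paper leaves implicit.
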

The improvement of \corref{cor:smart} compared to \corref{cor:vanilla} depends on how much smaller is $n_A$, the numerical rank of $A$, compared to $d$. We argue that in most cases, $n_A$ is much smaller, and often can be thought of as a moderate constant, in which case \corref{cor:smart} provides an $\tilde{\Ocal}\left(\frac{b}{\sqrt{T}}\right)$ error bound with probability $\tilde{\Omega}(1)$, at the cost of $\tilde{\Ocal}(db^2)$ additional iterations at the beginning. Specifically:
\begin{itemize}
  \item $n_A$ is always in $[1,d]$, and in particular, can never be larger than $d$.
  \item $n_A$ is always upper bounded by the rank of $A$, and is small even if $A$ is only approximately low rank. For example, if the spectrum of $A$ has polynomial decay $i^{-\alpha}$ where $\alpha>1$, then $n_A$ will be a constant independent of $d$. Moreover, to begin with, PCA is usually applied in situations where we hope $A$ is close to being low rank.
  \item When $\tilde{A}_t$ is of rank $1$ (which is the case, for instance,
      in PCA, where $\tilde{A}_t$ equals the outer product of the $t$-th
      datapoint $\bx_t$), we have $n_A\leq b^2$, where we recall that $b$
      upper bounds the scaled spectral norm of $\tilde{A}_t$. In machine
      learning application, the data norm is often assumed to be bounded,
      hence $b$ is not too large. To see why this holds, note that for rank
      $1$ matrices, the spectral and Frobenius norms coincide, hence
      \[
      n_A ~=~ \left(\frac{\norm{A}_{F}}{\norm{A}}\right)^2
      ~=~\left(\frac{\norm{\E[\tilde{A}_1]}_{F}}{\norm{A}}\right)^2
      ~\leq~                   \left(\E\left[\frac{\norm{\tilde{A}_1}_{F}}{\norm{A}}\right]\right)^2
      ~=~
      \left(\E\left[\frac{\norm{\tilde{A}_1}}{\norm{A}}\right]\right)^2
      ~\leq~b^2,
      \]
      where we used Jensen's inequality.
\end{itemize}
Similar to \corref{cor:vanilla}, we can also convert the bound of \corref{cor:smart} into a high-probability bound, by repeating the algorithm $\tilde{\Ocal}(n_A)$ times.

\section{Convergence under an Eigengap Assumption}

Although our main interest so far has been the convergence of SGD without any eigengap assumptions, we show in this section that our techniques also imply new bounds for PCA with an eigengap assumptions, which in certain aspects are stronger than what was previously known. 

Specifically, we consider the same setting as before, but where the ratio $\frac{s_1-s_2}{s_1}$, where $s_1,s_2$ are the leading singular values of the covariance matrix $A$ is assumed to be strictly positive and lower bounded by some fixed $\lambda>0$. Using this assumption and a proof largely similar to that of \thmref{thm:main}, we have the following theorem:

\begin{theorem}\label{thm:gap}
	Under the same conditions as \thmref{thm:main}, suppose furthermore that
	\begin{itemize}
		\item The top two eigenvalues of $A$ have a gap $\lambda\norm{A}>0$
		\item 	$\frac{\log^2(T)b^2p}{\lambda T}\leq \frac{\log(T)b\sqrt{p}}{\sqrt{T}}$
	\end{itemize}
	If we run the algorithm above for $T>1$ iterations with
	$\eta = \frac{\log(T)}{\lambda T}$ (assumed to be $\leq 1$), then with probability at least $\frac{1}{c p}$, the returned $\bw$ satisfies
	\[
	1-\frac{\bw^\top A \bw}{\norm{A}} ~\leq~ c'\frac{\log^2(T)b^2 p}{\lambda T},
	\]
	where $c,c'$ are positive numerical constants.
\end{theorem}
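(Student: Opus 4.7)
The plan is to adapt the decomposition strategy of Theorem~1 and exploit the eigengap to obtain a faster $1/T$ rate rather than $1/\sqrt{T}$. Fix a top eigenvector $\bv$, write $\bw_T = \alpha_T \bv + \bu_T$ with $\bu_T$ in the orthogonal complement $\bv^\perp$, and note that since $A$ preserves $\bv^\perp$ and $\|A\|=s_1$,
\[
1 - \frac{\bw_T^\top A \bw_T}{\|A\|\,\|\bw_T\|^2} \;=\; \frac{\bu_T^\top(s_1 I - A)\bu_T}{s_1\|\bw_T\|^2} \;\leq\; \frac{\|\bu_T\|^2}{\alpha_T^2}.
\]
So it suffices to show $\|\bu_T\|^2/\alpha_T^2 \lesssim \log^2(T) b^2 p/(\lambda T)$ with probability $\Omega(1/p)$. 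The two components satisfy the decoupled recursions
\[
\alpha_{t+1} = (1+\eta s_1)\alpha_t + \eta\langle \bv,(\tilde A_{t+1}-A)\bw_t\rangle, \qquad \bu_{t+1} = (I+\eta A_\perp)\bu_t + \eta P_\perp (\tilde A_{t+1}-A)\bw_t,
\]
where $A_\perp := A|_{\bv^\perp}$ has operator norm $s_2 \leq (1-\lambda)s_1$, strictly smaller than the deterministic growth rate $(1+\eta s_1)$ of $\alpha_t$. This gap-induced separation of scales is the only new ingredient relative to Theorem~1.

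For the numerator, I would reuse verbatim the scalar submartingale argument of Theorem~1 to show that with probability $\Omega(1/p)$, $\alpha_T^2 \gtrsim (1+\eta s_1)^{2T}\alpha_0^2 \geq (1+\eta s_1)^{2T}/p$. For $\|\bu_T\|$, unroll the vector recursion into a deterministic part, bounded by $(1+\eta s_2)^T\|\bu_0\|\leq (1+\eta s_2)^T$, and a martingale part whose conditional second moments are at most $\eta^2 b^2\|A\|^2\|\bw_t\|^2$, propagated through factors $(1+\eta s_2)^{2(T-t)}$. Controlling $\|\bw_t\|$ uniformly via $\|I+\eta\tilde A_t\|\leq 1+\eta b\|A\|$ and summing the geometric series in ratio $(1+\eta s_1)^2/(1+\eta s_2)^2$, the total variance is of order $\eta b^2\|A\|(1+\eta s_1)^{2T}/\lambda$; a Chebyshev estimate then gives $\|\bu_T\|^2 \lesssim (1+\eta s_2)^{2T} + \eta b^2\|A\|(1+\eta s_1)^{2T}/\lambda$ with constant probability.

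Dividing the two, $\|\bu_T\|^2/\alpha_T^2 \lesssim p\bigl((1+\eta s_2)/(1+\eta s_1)\bigr)^{2T} + p\,\eta b^2/\lambda$, and with $\eta = \log(T)/(\lambda T)$ the first term collapses to $T^{-\Omega(1)}p$, which is negligible under the technical precondition of the theorem (whose role is precisely to guarantee that this eigengap-based rate beats the Theorem~1 rate). The second term reproduces the advertised $\log^2(T)b^2 p/(\lambda T)$ bound once the constants hidden in the Chebyshev step and in the uniform $\|\bw_t\|$ bound are collected. The main obstacle I anticipate is the coupling between the two tracked quantities: the noise driving $\bu_t$ depends on the full state $\bw_t = \alpha_t\bv + \bu_t$, so the $\alpha_T$ lower-bound event and the $\|\bu_T\|$ upper-bound event are not a priori independent. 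I would handle this by replacing $\|\bw_t\|$ throughout the $\bu_T$ martingale analysis by a deterministic uniform upper bound, decoupling the two arguments at the cost of only a constant factor and preserving the overall $\Omega(1/p)$ success probability via a union bound.
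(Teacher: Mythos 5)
Your proposal takes a genuinely different route from the paper, and as sketched it has gaps that I do not think close easily.

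The paper's proof of \thmref{thm:gap} is a near-verbatim repeat of the proof of \thmref{thm:main}: it keeps the exact same quantity $V_T=\bw_T^\top((1-\epsilon)I-A)\bw_T$, re-uses \lemref{lem:tail} and \lemref{lem:inverse} unchanged, and the only place the eigengap enters is a variant of \lemref{lem:wrw} (\lemref{lem:wrw_gap}) where the maximum $\max_{s}(1+\eta s)^{2T}(1-\epsilon-s)$ is now taken over $s\in[0,1-\lambda]$, so that instead of \lemref{lem:s}'s $\Ocal(1/\eta T)$ bound one simply gets $(1+\eta(1-\lambda))^{2T}$, which after dividing by $(1+\eta)^{2T}$ is $(1-\Theta(\eta\lambda))^{2T}$ and is made polynomially small in $T$ by the choice $\eta=\log(T)/\lambda T$. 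Nothing in the paper's argument tracks the angle to $\bv$. Your proposal instead decomposes $\bw_T=\alpha_T\bv+\bu_T$ and tries to control $\alpha_T$ and $\norm{\bu_T}$ separately, which is precisely the geometric picture the paper announces it is avoiding (it works only with an eigengap, but the paper's machinery is built to work uniformly and the gap merely sharpens one lemma).

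Three concrete problems with the decomposition route. First, the claim that you can ``reuse verbatim the scalar submartingale argument of Theorem~1'' to get $\alpha_T^2\gtrsim (1+\eta)^{2T}/p$ with probability $\Omega(1/p)$ has no counterpart in the paper: the proof of \thmref{thm:main} produces a bound on $\Pr(V_T\le 0)$ via \lemref{lem:wrw}, \lemref{lem:tail}, and the anti-concentration \lemref{lem:inverse}; it never produces a high- or even constant-probability lower bound on $\inner{\bv,\bw_T}^2$, and establishing such a lower bound is exactly the hard step that prior eigengap analyses pay $1/\lambda^2$ for. Second, your uniform bound $\norm{\bw_t}\le (1+\eta b\norm{A})^t\norm{\bw_0}$ is exponentially weaker than $(1+\eta s_1)^t$ when $b>1$, and it enters squared inside the variance sum, producing an extra factor of roughly $((1+\eta b)/(1+\eta))^{2T}\approx T^{\Omega((b-1)/\lambda)}$ which destroys the advertised rate; the paper's \lemref{lem:tail} gives the sharper $\norm{\bw_T}^2\lesssim (1+\eta)^{2T}$ only with high probability, not deterministically, so a real argument here needs a uniform-in-$t$ version of that lemma or a stopping-time device. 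Third, the probability bookkeeping does not close: you have one event of probability $\Omega(1/p)$ (the $\alpha_T$ lower bound) and one of constant probability by Chebyshev (the $\norm{\bu_T}$ upper bound), and a union bound gives $\Pr(\text{both})\ge \Omega(1/p)-(1-c)$, which is vacuous once $p$ exceeds a constant. You would need the $\norm{\bu_T}$ bound to fail with probability $o(1/p)$, which Chebyshev does not give, or an argument that the two events are positively associated. None of these are cosmetic; each would require a new idea beyond what is written.
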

The proof appears in \subsecref{subsec:proofgap}. Considering first the technical conditions of the theorem, we note that assuming $\frac{\log^2(T)b^2p}{\lambda T}\leq \frac{\log(T)b\sqrt{p}}{\sqrt{T}}$ simply amounts to saying that $T$ is sufficiently large so that the $\Ocal\left(\frac{\log^2(T)b^2p}{\lambda T}\right)$ bound provided by \thmref{thm:gap} is better than the $\Ocal\left(\frac{\log(T)b\sqrt{p}}{\sqrt{T}}\right)$ bound provided by  \thmref{thm:main}, by more than a constant. This is the interesting regime, since otherwise we might as well choose $\eta$ as in \thmref{thm:main} and get a better bound without any eigengap assumptions. Moreover, as in \thmref{thm:main}, a similar proof would hold if the step size is replaced by $c\log(T)/\lambda T$ for some constant $c\geq 1$.  

As in \thmref{thm:main}, we note that $p$ can be as large as $d$ under random initialization, but this can be improved to the numerical rank of $A$ using an approximate power iteration, or by analyzing the algorithm starting from a warm-start  point $\bw_0$ for which $\frac{1}{\inner{\bv,\bw_0}^2}\leq \Ocal(1)$ for a leading eigenvector $\bv$ of $A$. Also, note that under an eigengap assumption, if $1-\frac{\bw^\top A\bw}{\norm{A}}$ goes to $0$ with the number of iterations $T$, it must hold that $\inner{\bv,\bw}^2$ goes to $1$ for a leading eigenvector of $A$, so the analysis with $p=\Ocal(1)$ is also relevant for analyzing SGD for sufficiently large $T$, once we're sufficiently close to the optimum.

Comparing the bound to previous bounds in the literature for SGD-like methods (which all assume an eigengap, e.g. \cite{balsubramani2013fast,hardt2014noisy,de2015global,JiKaMuNeSi15}), an interesting difference is that the dependence on the eigengap $\lambda$ is only $1/\lambda$, as opposed to $1/\lambda^2$ or worse. Intuitively, we are able to improve this dependence since we track the suboptimality directly, as opposed to tracking how $\bw_T$ converges to a leading eigenvector, say in terms of the Euclidean norm. This has an interesting parallel in the analysis of SGD for $\lambda$-strongly convex functions, where the suboptimality of $\bw_T$ decays as $\tilde{\Ocal}(1/\lambda T)$, although $\E[\norm{\bw_T-\bw^*}^2]$ can only be bounded by $\Ocal(1/\lambda^2T)$ (compare for instance Lemma 1 in \cite{rakhlin2012making} and Theorem 1 in \cite{shamir2013stochastic}). Quite recently, Jin et al. (\cite{JiKaMuNeSi15}) proposed another streaming algorithm which does have only $1/\lambda$ dependence (at least for sufficiently large $T$), and a high probability convergence rate which is even asymptotically optimal in some cases. However, their formal analysis is from a warm-start point (which implies $p=\Ocal(1)$ in our notation), whereas the analysis here applies to any starting point. Moreover, the algorithm in \cite{JiKaMuNeSi15} is different and more complex, whereas our focus here is on the simple and practical SGD algorithm. Finally, we remark that although an $\Ocal(1/\lambda T)$ convergence rate is generally optimal (using any algorithm), we do not know whether the dependence on $b$ and $p$ in the convergence bound of \thmref{thm:gap} for SGD is optimal, or whether it can be improved.

\section{Proofs}

\subsection{Proof of \thmref{thm:main}}\label{subsec:proofmain}

To simplify things, we will assume that we work in a coordinate system where
$A$ is diagonal, $A=\text{diag}(s_1,\ldots,s_d)$, where $s_1\geq
s_2\geq\ldots\geq s_d\geq 0$, and $s_1$ is the eigenvalue corresponding to
$\bv$. This is without loss of generality, since the algorithm and the
theorem conditions are invariant to the choice of coordinate system.
Moreover, since the objective function in the theorem is invariant to
$\norm{A}$, we shall assume that $\norm{A}=s_1=1$. Under these assumptions,
the theorem's conditions reduce to:
\begin{itemize}
	\item $\frac{1}{w_{0,1}^2}\leq p$, for some $p\geq 8$
	\item $b\geq 1$ is an upper bound on
	$\norm{\tilde{A}_t},\norm{\tilde{A}_t-A}$
\end{itemize}

Let $\epsilon\in (0,1)$ be a parameter to be determined later. The proof
works by lower bounding the probability of the objective function (which
under the assumption $\norm{A}=1$, equals $1-\bw^\top A \bw$) being
suboptimal by at most $\epsilon$. This can be written as
\[
\Pr\left(\frac{\bw_T^\top(I-A)\bw_T}{\norm{\bw_T}^2}\leq \epsilon\right),
\]
or equivalently,
\[
\Pr\left(\bw_T^\top((1-\epsilon)I-A)\bw_T\leq 0\right).
\]
Letting
\[
V_T = \bw_T^\top((1-\epsilon)I-A)\bw_T,
\]
we need to lower bound $\Pr(V_T\leq 0)$.

In analyzing the convergence of stochastic gradient descent, a standard
technique to bound such probabilities is via a martingale analysis, showing
that after every iteration, the objective function decreases by a certain
amount. Unfortunately, due to the non-convexity of the objective function
here, the amount of decrease at iteration $t$ critically depends on the
current iterate $\bw_t$, and in the worst case may even be $0$ (e.g. if
$\bw_t$ is orthogonal to the leading eigenvector, and there is no noise).
Moreover, analyzing the evolution of $\bw_t$ is difficult, especially without
eigengap assumptions, where there isn't necessarily some fixed direction
which $\bw_t$ converges to. Hence, we are forced to take a more circuitous
route.

In a nutshell, the proof is composed of three parts. First, we prove that if
$\epsilon$ and the step size $\eta$ are chosen appropriately, then
$\E[V_T]\leq -\tilde{\Omega}\left((1+\eta)^{2T}\frac{\epsilon}{p}\right)$. If
we could also prove a concentration result, namely that $V_T$ is not much
larger than its expectation, this would imply that $\Pr(V_T\leq 0)$ is indeed
large. Unfortunately, we do not know how to prove such concentration.
However, it turns out that it is possible to prove that $V_T$ is not much
\emph{smaller} than its expected value: More precisely, that $V_T \geq
-\tilde{\Ocal}\left((1+\eta)^{2T}\epsilon\right)$ with high probability. We
then show that given such a high-probability \emph{lower bound} on $V_T$, and
a bound on its expectation, we can produce an \emph{upper bound} on $V_T$
which holds with probability $\tilde{\Omega}(1/p)$, hence leading to the
result stated in the theorem.

We begin with a preliminary technical lemma:
\begin{lemma}\label{lem:s}
	For any $\epsilon,\eta\in (0,1)$, and integer $k\geq 0$,
	\[
	\max_{s\in[0,1]}(1+\eta s)^{k}(1-\epsilon-s) ~\leq~ 1+2\frac{\left(1+\eta(1-\epsilon)\right)^k}{\eta(k+1)}.
	\]
\end{lemma}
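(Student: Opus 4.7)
The plan is to treat this as a single-variable calculus problem: let $f(s) = (1+\eta s)^k(1-\epsilon-s)$ on $[0,1]$ and locate its maximum by a standard first-derivative analysis. Computing
\[
f'(s) = (1+\eta s)^{k-1}\bigl[k\eta(1-\epsilon-s) - (1+\eta s)\bigr],
\]
I see that the bracketed factor is an affine function of $s$ with negative slope $-(k+1)\eta$, so $f'$ changes sign exactly once on $\reals$, at the unique critical point
\[
s^* \;=\; \frac{k\eta(1-\epsilon)-1}{\eta(k+1)}.
\]
Consequently, the maximum of $f$ over $[0,1]$ is attained either at $s^*$ (when $s^*\in[0,1]$) or at one of the endpoints.

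I would then dispose of the easy cases first. If $s^*\le 0$, the maximum on $[0,1]$ is at $s=0$ and $f(0)=1-\epsilon\le 1$, which is dominated by the right-hand side. If $s^*\ge 1$, the maximum is at $s=1$ and $f(1)=-\epsilon(1+\eta)^k\le 0$, again dominated. So the only nontrivial case is $s^*\in(0,1)$, where by construction the first-order condition $k\eta(1-\epsilon-s^*) = 1+\eta s^*$ lets me rewrite the product cleanly as
\[
f(s^*) \;=\; (1+\eta s^*)^k \cdot \frac{1+\eta s^*}{k\eta} \;=\; \frac{(1+\eta s^*)^{k+1}}{k\eta}.
\]

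Substituting the explicit value of $s^*$ gives $1+\eta s^* = \frac{k}{k+1}\bigl(1+\eta(1-\epsilon)\bigr)$, so
\[
f(s^*) \;=\; \frac{1}{k\eta}\left(\frac{k}{k+1}\right)^{k+1}\bigl(1+\eta(1-\epsilon)\bigr)^{k+1} \;\le\; \frac{\bigl(1+\eta(1-\epsilon)\bigr)^{k+1}}{\eta(k+1)},
\]
where I used the crude $\bigl(k/(k+1)\bigr)^k\le 1$. Finally, since $\eta,\epsilon\in(0,1)$ we have $1+\eta(1-\epsilon)\le 2$, yielding the claimed bound $f(s^*)\le \frac{2(1+\eta(1-\epsilon))^k}{\eta(k+1)}$. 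Combining the three cases finishes the proof. I do not expect any real obstacle here — everything is elementary calculus — the only mild subtlety is remembering to handle the boundary cases $s^*\notin(0,1)$ separately so that the closed form for $f(s^*)$ is only invoked when it is actually the global maximizer.
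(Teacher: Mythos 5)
Your proof is correct and takes essentially the same route as the paper: compute $f'$, identify the unique critical point $s^*=\frac{k\eta(1-\epsilon)-1}{\eta(k+1)}$, and split into cases according to whether $s^*$ lies in $[0,1]$. The only real difference is cosmetic but slightly slicker: by substituting the first-order condition to write $f(s^*)=\frac{(1+\eta s^*)^{k+1}}{k\eta}$ and then using $(k/(k+1))^k\le 1$ together with $1+\eta(1-\epsilon)\le 2$, you bound the interior-critical-point value directly by $\frac{2(1+\eta(1-\epsilon))^k}{\eta(k+1)}$, whereas the paper bounds it the same way but then adds the endpoint bound $1$ to cover both cases at once; your version thus proves the marginally tighter statement $\max\{1,\,\frac{2(1+\eta(1-\epsilon))^k}{\eta(k+1)}\}$ on the right-hand side, which of course implies the lemma as stated.
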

\begin{proof}
	The result trivially holds for $k=0$, so we will assume $k>0$ from now. Let
	\[
	f(s)=(1+\eta s)^{k}(1-\epsilon-s).
	\]
	Differentiating $f$ and setting to zero, we have
	\begin{align*}
	&k\eta(1+\eta s)^{k-1}(1-\epsilon-s)-(1+\eta s)^k = 0\\
	&\Leftrightarrow~~ k\eta(1-\epsilon-s)= 1+\eta s\\
	&\Leftrightarrow~~ \frac{k\eta(1-\epsilon)-1}{k\eta+\eta}=s\\
	&\Leftrightarrow~~ s=\frac{k(1-\epsilon)-1/\eta}{k+1}.
	\end{align*}
	Let $s_c = \frac{k(1-\epsilon)-1/\eta}{k+1}$ denote this critical point,
	and consider two cases:
	\begin{itemize}
		\item $s_c\notin [0,1]$: In that case, $f$ has no critical points in
		the domain, hence is maximized at one of the domain endpoints, with
		a value of at most
		\[
		\max\{f(0),f(1)\} = \max\{1-\epsilon,-\epsilon(1+\eta)^k\}\leq 1.
		\]
		\item $s_c\in [0,1]$: In that case, we must have
		$k(1-\epsilon)-\frac{1}{\eta}\geq 0$, and the value of $f$ at $s_c$
		is
		\begin{align*}
		&\left(1+\frac{\eta k(1-\epsilon)-1}{k+1}\right)^k\left(1-\epsilon-\frac{k(1-\epsilon)-1/\eta}{k+1}\right)\\
		&= \left(1+\frac{\eta k(1-\epsilon)-1}{k+1}\right)^k\left(\frac{1-\epsilon+\frac{1}{\eta}}{k+1}\right)\\
		&\leq \left(1+\eta(1-\epsilon)\right)^k\left(\frac{1+\frac{1}{\eta}}{k+1}\right)\\
		&\leq
		\frac{2\left(1+\eta(1-\epsilon)\right)^k}{\eta(k+1)}.
		\end{align*}
		The maximal value of $f$ is either the value above, or the maximal
		value of $f$ at the domain endpoints, which we already showed to be
		most $1$. Overall, the maximal value $f$ can attain is at most
		\[
		\max\left\{1,\frac{2\left(1+\eta(1-\epsilon)\right)^k}{\eta(k+1)}\right\}\leq
		1+\frac{2\left(1+\eta(1-\epsilon)\right)^k}{\eta(k+1)}.
		\]
	\end{itemize}
	Combining the two cases, the result follows.
\end{proof}

Using this lemma, we now prove that $V_T = \bw_T^\top ((1-\epsilon)I-A)\bw_T$
has a large negative expected value. To explain the intuition, note that if
we could have used the exact $A$ instead of the stochastic approximations
$\tilde{A}_t$ in deriving $\bw_T$, then we would have
\begin{align*}
\bw_T^\top((1-\epsilon)I-A)\bw_T &= \bw_0^\top (I+\eta A)^{T}((1-\epsilon)I-A)(I+\eta A)^{T}\bw_0\\
&=\sum_{j=1}^{d}(1+\eta s_j)^{2T}(1-\epsilon-s_j)w_{0,j}^2\\
&\leq \frac{1}{p}(1+\eta s_1)^{2T}(1-\epsilon-s_1)+\sum_{j=2}^{d}(1+\eta s_j)^{2T}(1-\epsilon-s_j)w_{0,j}^2\\
&\leq \frac{1}{p}(1+\eta s_1)^{2T}(1-\epsilon-s_1)+\left(\sum_{j=2}^{d}w_{0,j}^2\right)\max_{s\in [0,1]}(1+\eta s)^{2T}(1-\epsilon-s),
\end{align*}
which by the assumptions $s_1=1$ and $1=\norm{\bw_0}^2=\sum_{j=1}^{d}w_{0,j}^2$ is at most
\[
-\frac{\epsilon}{p}(1+\eta)^{2T}+\max_{s\in [0,1]}(1+\eta s)^{2T}(1-\epsilon-s).
\]
Applying \lemref{lem:s} and picking $\eta,\epsilon$ appropriately, it can be
shown that the above is at most
$-\Omega\left(\frac{\epsilon}{p}\left(1+\eta\right)^{2T}\right)$.

Unfortunately, this calculation doesn't apply in practice, since we use the
stochastic approximations $\tilde{A}_t$ instead of $A$. However, using more
involved calculations, we prove in the lemma below that the expectation is
still essentially the same, provided $\epsilon,\eta$ are chosen appropriately.

\begin{lemma}\label{lem:wrw}
	If $\eta = \frac{1}{b}\sqrt{\frac{1}{pT}}\leq 1$ and $\epsilon = c\frac{\log(T)b\sqrt{p}}{\sqrt{T}}\leq 1$ for some sufficiently large constant $c$, then it holds that
	\[
	\E[V_T] ~\leq~ -(1+\eta)^{2T}\frac{\epsilon}{4p}.
	\]
\end{lemma}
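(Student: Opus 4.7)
The plan is to express $\E[V_T]$ exactly as a quadratic form in $\bw_0$, split it into a deterministic piece that matches the clean calculation in the paragraph preceding the lemma plus a stochastic correction, and then show the correction is of strictly smaller order.

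Since each $M_t := I + \eta \tilde{A}_t$ is symmetric, writing $\tilde{A}_t = A + N_t$ with $\E[N_t] = 0$ and taking conditional expectation gives the single-step variance identity
\begin{equation*}
\E_t[M_t C M_t] \;=\; P C P \;+\; \eta^2 \Phi(C), \qquad P := I + \eta A, \quad \Phi(C) := \E[N_t C N_t],
\end{equation*}
valid for any symmetric $C$ independent of $\tilde{A}_t$. Peeling off the $\tilde{A}_t$'s one at a time via independence yields $\E[V_T] = \bw_0^\top B_T \bw_0$, where $B_0 = B := (1-\epsilon)I - A$ and $B_{k+1} = P B_k P + \eta^2 \Phi(B_k)$. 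Unrolling the linear recursion gives
\begin{equation*}
B_T \;=\; P^T B P^T \;+\; \eta^2 \sum_{k=0}^{T-1} P^{T-1-k}\,\Phi(B_k)\,P^{T-1-k}.
\end{equation*}

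The deterministic term $\bw_0^\top P^T B P^T \bw_0$ is precisely the expression analyzed just before the lemma. Diagonalizing $A$ and isolating the $j=1$ coordinate produces a contribution of at most $-\epsilon(1+\eta)^{2T} w_{0,1}^2 \leq -\epsilon(1+\eta)^{2T}/p$, while the sum over $j \geq 2$ is bounded by $\max_{s\in[0,1]}(1+\eta s)^{2T}(1-\epsilon-s)$, which by \lemref{lem:s} is at most $1 + 2(1+\eta(1-\epsilon))^{2T}/(\eta(2T+1))$. With the chosen $\eta = 1/(b\sqrt{pT})$ and $\epsilon = c\log(T) b\sqrt{p/T}$, the ratio $((1+\eta(1-\epsilon))/(1+\eta))^{2T} \leq e^{-T\eta\epsilon} = T^{-c}$ drives this Lemma-on-max contribution below $\frac{\epsilon}{4p}(1+\eta)^{2T}$ provided $c$ is a large enough absolute constant, so the deterministic part is at most $-\frac{3\epsilon}{4p}(1+\eta)^{2T}$.

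The main obstacle is the noise sum. The crude bound $|\bu^\top \Phi(C)\bu| \leq b^2 \|C\|\,\|\bu\|^2$ combined with the iterative estimate $\|B_k\| \leq e(1+\eta)^{2k}$ (from $\|B_{k+1}\| \leq ((1+\eta)^2 + \eta^2 b^2)\|B_k\|$ and $\eta^2 b^2 T \leq 1/p$) gives a noise contribution of order $\eta^2 b^2 T(1+\eta)^{2T} = (1+\eta)^{2T}/p$, which exactly matches (rather than beats) the negative clean term. The necessary sharpening uses the PSD inequality $\E[N_t^2] \preceq bA$, which follows from $\tilde{A}_t^2 \preceq \|\tilde{A}_t\|\,\tilde{A}_t \preceq b\tilde{A}_t$ and taking expectations. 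This upgrades the bound to $|\bu^\top \Phi(C)\bu| \leq b \|C\|\,\bu^\top A \bu$, and applied with $\bu_k = P^{T-1-k}\bw_0$ and the finer observation that $\|B_k\|$ is effectively $\Ocal(1)$ for $k$ below the threshold $k^\star \approx \log(1/\epsilon)/(2\eta)$ and only of order $(1+\eta)^{2k}\epsilon$ above it (since the top-eigendirection component of $B_k$ inherits the factor $\epsilon$ from $B_0$), a split geometric-series computation yields a noise bound of order $\epsilon(1+\eta)^{2T}/p$ multiplied by $\Ocal(1/b + 1/(c\log T))$. The extra $\log(T)$ factor in the definition of $\epsilon$ is exactly what makes this strictly smaller than $\frac{\epsilon}{2p}(1+\eta)^{2T}$ once $c$ is a large enough constant.

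Combining the two bounds gives $\E[V_T] \leq -\frac{3\epsilon}{4p}(1+\eta)^{2T} + \frac{\epsilon}{2p}(1+\eta)^{2T} \leq -\frac{\epsilon}{4p}(1+\eta)^{2T}$, as required. The delicate part throughout is the noise bound: without both the PSD sharpening $\E[N_t^2] \preceq bA$ and the $\log(T)$ factor in $\epsilon$, the noise contribution would match the negative clean term exactly in order with no slack, and the lemma would fail.
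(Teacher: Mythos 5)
Your diagnosis of the core difficulty is correct: a crude $\norm{B_k}\leq \Ocal((1+\eta)^{2k})$ estimate combined with $\norm{\Phi(C)}\leq b^2\norm{C}$ yields a noise contribution of order $\eta^2 b^2 T(1+\eta)^{2T}=(1+\eta)^{2T}/p$, which cannot beat the $-\epsilon(1+\eta)^{2T}/p$ gain from the deterministic term since $\epsilon<1$. However, the fix you propose differs from the paper's and leaves the crucial step unproved. First, the PSD sharpening $\E[N_t^2]\preceq bA$ is a nice observation but is not used in the paper, and by itself it only buys a factor of $b$ (since $\bu^\top A\bu$ can be as large as $\norm{\bu}^2$ when $\bu$ aligns with the top eigendirection), so it cannot close the gap on its own. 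Second, and more importantly, the claim that $\norm{B_k}$ is ``effectively $\Ocal(1)$ for $k<k^\star$ and only of order $(1+\eta)^{2k}\epsilon$ above it'' is asserted, not proved. The matrix $B_k$ is a nested expectation and carries its own accumulated noise $\eta^2\sum_{j<k}P^{k-1-j}\Phi(B_j)P^{k-1-j}$, so the clean three-regime eigenvalue picture of the deterministic core $P^k B_0 P^k$ (which does follow from \lemref{lem:s}) does not transfer to $B_k$ without an additional inductive argument controlling that accumulation; you would essentially be re-deriving the lemma you are trying to prove.

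The paper avoids this issue by decomposing in the opposite direction: it expands the full product over $\{C_0^t,C_1^t\}$ strings, kills the cross terms by independence and zero mean, and then groups by the \emph{largest} index $k{+}1$ at which noise appears. This puts the stochastic factors on the outside (near $\bw_0$) and leaves a \emph{purely deterministic} middle block $(I+\eta A)^{T-k-1}((1-\epsilon)I-A)(I+\eta A)^{T-k-1}$, whose $\lambda_{\max}$ is bounded directly by \lemref{lem:s} with the crucial $1/(\eta K)$ factor — no refined bound on a noise-contaminated intermediate matrix is needed, and the only bounds used for the outer stochastic factors are the crude $\norm{C_0^t}\leq 1+\eta$, $\norm{C_1^t}\leq\eta b$. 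Summing over $k$ produces $\sum_{m\geq 1}(1-\tfrac12\eta\epsilon)^m/m=\Ocal(\log(1/\eta\epsilon))$, and it is this logarithm — not a PSD sharpening — that the $\log(T)$ factor in $\epsilon$ is designed to absorb. You correctly intuited the role of the $\log(T)$ slack, but the mechanism by which it arises is different, and the step carrying the whole argument in your version ($\norm{B_k}$ being $\Ocal((1+\eta)^{2k}\epsilon)$ beyond a threshold, together with a ``split geometric-series computation'') is precisely what is missing.
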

\begin{proof}
	To simplify notation, define for all $t=1,\ldots,T$ the matrices
	\[
	C^t_0 = I+\eta A~~~,~~~ C^t_1 = \eta (\tilde{A}_t-A).
	\]
	Note that $C^t_0$ is deterministic whereas $C^t_1$ is random and zero-mean. Moreover, $\norm{C^t_0}\leq 1+\eta$ and $\norm{C^t_1}\leq \eta b$.
	
	By definition of the algorithm, we have the following:
	\begin{align*}
	V_T &=\bw_T^\top ((1-\epsilon)I-A)\bw_T\\
	&=~ \bw_0^\top\left(\prod_{t=1}^{T}\left(I+\eta \tilde{A_t}\right)\right)((1-\epsilon)I-A)\left(\prod_{t=T}^{1}\left(I+\eta\tilde{A}_t\right)\right)\bw_0\\
	&= \bw_0^\top\left(\prod_{t=1}^{T}\left(C^t_0+C^t_1\right)\right)((1-\epsilon)I-A)\left(\prod_{t=T}^{1}\left(C^t_0+C^t_1\right)\right)\bw_0\\
	&=\sum_{(i_1,\ldots,i_T)\in \{0,1\}^T}\sum_{(j_1,\ldots,j_T)\in\{0,1\}^T}\bw_0^\top\left(\prod_{t=1}^{T} C^t_{i_t}\right) ((1-\epsilon)I-A)\left(\prod_{t=T}^{1}C^t_{j_t}\right)\bw_0.
	\end{align*}
	Since $C^1_1,\ldots,C^T_1$ are independent and zero-mean, the expectation of each summand in the expression above is non-zero only if $i_t=j_t$ for all $t$.
	Therefore,
	\[
	\E\left[\bw_T^\top ((1-\epsilon)I-A)\bw_T\right] ~=~ \sum_{(i_1,\ldots,i_T)\in \{0,1\}^T}\E\left[\bw_0^\top\left(\prod_{t=1}^{T} C^t_{i_t}\right) ((1-\epsilon)I-A)\left(\prod_{t=T}^{1}C^t_{i_t}\right)\bw_0\right].
	\]
	We now decompose this sum according to what is the largest value of $t$ for which $i_t=1$ (hence $C^t_{i_t}=C^t_1$). The intuition for this, as will be seen shortly, is that \lemref{lem:s} allows us to attain tighter bounds on the summands when $t$ is much smaller than $T$. Formally, we can rewrite the expression above as
	\begin{align*}
	&\E\left[\bw_0\left(\prod_{t=1}^{T}C^t_0\right)((1-\epsilon)I-A)\left(\prod_{t=T}^{1}C^t_0\right)\bw_0\right]\\
	&~~~~+\sum_{k=0}^{T-1}\sum_{(i_1,\ldots,i_{k})\in\{0,1\}^{k}}\E\left[\bw_0\left(\prod_{t=1}^{k}C^t_{i_t}\right)C^{k+1}_1
	\left(\prod_{t=k+2}^{T}C^t_0\right)((1-\epsilon)I-A)\left(\prod_{t=T}^{k+2}C^t_0\right)C^{k+1}_1\left(\prod_{t=k}^{1}C^t_{i_t}\right)\bw_0\right].
	\end{align*}
	Since $C^t_0=I+\eta A$ is diagonal and the same for all $t$, and $((1-\epsilon)I-A)$ is diagonal as well, we can simplify the above to
	\begin{align*}
	&\bw_0(C^1_0)^{2T}((1-\epsilon)I-A)\bw_0\\
	&~~~~+\sum_{k=0}^{T-1}\sum_{(i_1,\ldots,i_{k})\in\{0,1\}^{k}}\E\left[\bw_0\left(\prod_{t=1}^{k}C^t_{i_t}\right)C^{k+1}_1
	(C^1_0)^{2(T-k-1)}((1-\epsilon)I-A)C^{k+1}_1\left(\prod_{t=k}^{1}C^t_{i_t}\right)\bw_0\right].
	\end{align*}
	Using the fact that the spectral norm is sub-multiplicative, and that for any symmetric matrix $B$, $\bv^\top B\bv\leq \norm{\bv^2}\lambda_{\max}(B)$, where $\lambda_{\max}(B)$ denotes the largest eigenvalue of $B$, we can upper bound the above by
	\begin{align*}
	&\leq~\bw_0(C^1_0)^{2T}((1-\epsilon)I-A)\bw_0\\
	&~~~~~+\sum_{k=0}^{T-1}\sum_{(i_1,\ldots,i_{k})\in\{0,1\}^{k}}\E\left[\norm{\bw_0}^2\left(\prod_{t=1}^{k}\norm{C^t_{i_t}}^2\right)\norm{C^{k+1}_1}^2
	\lambda_{\max}\left((C^1_0)^{2(T-k-1)}((1-\epsilon)I-A)\right)\right].
	\end{align*}
	Since $\norm{\bw_0}=1$, and $\norm{C^t_0}\leq (1+\eta)$, $\norm{C^t_1}\leq \eta b$, this is at most
	\begin{align}
	&\bw_0(C^1_0)^{2T}((1-\epsilon)I-A)\bw_0\notag\\
	&~~~~~+\sum_{k=0}^{T-1}\sum_{(i_1,\ldots,i_{k})\in\{0,1\}^{k}}\left((1+\eta)^{2\left(k-\sum_{t=1}^{k}i_t\right)}(\eta b)^{2\sum_{t=1}^{k}i_t}\right)(\eta b)^2
	\lambda_{\max}\left((C^1_0)^{2(T-k-1)}((1-\epsilon)I-A)\right)\notag\\
	&~= \bw_0(C^1_0)^{2T}((1-\epsilon)I-A)\bw_0\notag\\
	&~~~~~~+\sum_{k=0}^{T-1}\left((1+\eta)^2+(\eta b)^2\right)^k(\eta b)^2
	\lambda_{\max}\left((C^1_0)^{2(T-k-1)}((1-\epsilon)I-A)\right)\notag\\
	&~= \bw_0(I+\eta A)^{2T}((1-\epsilon)I-A)\bw_0\notag\\
	&~~~~~~+(\eta b)^2\sum_{k=0}^{T-1}\left((1+\eta)^2+(\eta b)^2\right)^k
	\lambda_{\max}\left((I+\eta A)^{2(T-k-1)}((1-\epsilon)I-A)\right)\label{eq:bigsum}
	\end{align}
	Recalling that $A=\text{diag}(s_1,\ldots,s_d)$ with $s_1=1$, that
	$\norm{\bw_0}^2=\sum_{j=1}^{d}w_{0,j}^2=1$, and that $w_{0,1}^2\geq \frac{1}{p}$, the first term in \eqref{eq:bigsum} equals
	\begin{align*}
	\bw_0(I+\eta A)^{2T}((1-\epsilon)I-A)\bw_0&= \sum_{j=1}^{d}(1+\eta s_j)^{2T}(1-\epsilon-s_j)w_{0,j}\\
	&= (1+\eta)(-\epsilon)w_{0,1}^2+\sum_{j=2}^{d}(1+\eta s_j)^{2T}(1-\epsilon-s_j)w_{0,j}^2\\
	&\leq -(1+\eta)^{2T}\frac{\epsilon}{p}+\max_{s\in [0,1]}(1+\eta s)^{2T}(1-\epsilon-s).
	\end{align*}
	Applying \lemref{lem:s}, and recalling that $\eta\leq 1$, we can upper bound the above by
	\begin{align}
	&-(1+\eta)^{2T}\frac{\epsilon}{p}+1+2\frac{(1+\eta(1-\epsilon))^{2T}}{\eta(2T+1)}\notag\\
	&=~(1+\eta)^{2T}\left(-\frac{\epsilon}{p}+(1+\eta)^{-2T}+2\frac{\left(\frac{1+\eta(1-\epsilon)}{1+\eta}\right)^{2T}}{\eta(2T+1)}\right)\notag\\
	&\leq~ (1+\eta)^{2T}\left(-\frac{\epsilon}{p}+(1+\eta)^{-2T}+\frac{\left(1-\frac{1}{2}\eta\epsilon\right))^{2T}}{\eta T}\right).
	\label{eq:sum1}
	\end{align}
	As to the second term in \eqref{eq:bigsum}, again using the fact that $A=\text{diag}(s_1,\ldots,s_d)$, we can upper bound it by
	\[
	(\eta b)^2\sum_{k=0}^{T-1}\left((1+\eta)^2+(\eta b)^2\right)^k
	\max_{s\in [0,1]}(1+\eta s)^{2(T-k-1)}(1-\epsilon-s).
	\]
	Applying \lemref{lem:s}, and recalling that $\eta\leq 1$, this is at most
	\begin{align*}
	&(\eta b)^2\sum_{k=0}^{T-1}\left((1+\eta)^2+(\eta b)^2\right)^k
	\left(1+2\frac{(1+\eta(1-\epsilon))^{2(T-k-1)}}{\eta(2(T-k)-1)}\right)\\
	&=~(\eta b)^2(1+\eta)^{2T}\sum_{k=0}^{T-1}\left(1+\left(\frac{\eta b}{1+\eta}\right)^2\right)^k
	\left((1+\eta)^{-2(T-k)}+2\frac{\left(\frac{1+\eta(1-\epsilon)}{1+\eta}\right)^{2(T-k)}}{\eta(2(T-k)-1)}\right)\\
	&\leq~(\eta b)^2(1+\eta)^{2T}\sum_{k=0}^{T-1}\left(1+(\eta b)^2\right)^k
	\left((1+\eta)^{-2(T-k)}+2\frac{\left(1-\frac{1}{2}\eta\epsilon\right)^{2(T-k)}}{\eta(2(T-k)-1)}\right).
	\end{align*}
	Upper bounding $\left(1+(\eta b)^2\right)^k$ by $\left(1+(\eta
	b)^2\right)^{T}$, and rewriting the sum in terms of $k$ instead of $T-k$,
	we get
	\[
	(\eta b)^2(1+\eta)^{2T}\left(1+(\eta b)^2\right)^T\sum_{k=1}^{T}
	\left((1+\eta)^{-2k}+2\frac{\left(1-\frac{1}{2}\eta\epsilon\right)^{2k}}{\eta(2k-1)}\right).
	\]
	Since $k\geq 1$, we have $\frac{1}{2k-1} = \frac{2k}{2k-1}\frac{1}{2k}\leq
	2\frac{1}{2k}$, so the above is at most
	\begin{align*}
	&(\eta b)^2(1+\eta)^{2T}\left(1+(\eta b)^2\right)^T\sum_{k=1}^{T}
	\left((1+\eta)^{-2k}+\frac{4}{\eta}\frac{\left(1-\frac{1}{2}\eta\epsilon\right)^{2k}}{2k}\right)\\
	&\leq~(\eta b)^2(1+\eta)^{2T}\left(1+(\eta b)^2\right)^T
	\left(\sum_{k=1}^{\infty}(1+\eta)^{-2k}+\frac{4}{\eta}\sum_{k=1}^{\infty}\frac{\left(1-\frac{1}{2}\eta\epsilon\right)^{k}}{k}\right)\\
	&=(\eta b)^2(1+\eta)^{2T}\left(1+(\eta b)^2\right)^T\left(\frac{1}{(1+\eta)^2-1}-
	\frac{4}{\eta}\log\left(\frac{1}{2}\eta\epsilon\right)\right)\\
	&\leq(\eta b)^2(1+\eta)^{2T}\left(1+(\eta b)^2\right)^T\left(\frac{1}{2\eta}+
	\frac{4}{\eta}\log\left(\frac{2}{\eta\epsilon}\right)\right)\\
	&=\eta b^2(1+\eta)^{2T}\left(1+(\eta b)^2\right)^T\left(\frac{1}{2}+
	4\log\left(\frac{2}{\eta\epsilon}\right)\right).
	\end{align*}
	Recalling that this is an upper bound on the second term in \eqref{eq:bigsum}, and combining with
	the upper bound in \eqref{eq:sum1} on the first term, we get overall a bound of
	\begin{equation}\label{eq:vff}
	(1+\eta)^{2T}\left(-\frac{\epsilon}{p}+(1+\eta)^{-2T}+\frac{\left(1-\frac{1}{2}\eta\epsilon\right)^{2T}}{\eta T}+\eta b^2\left(1+(\eta b)^2\right)^T\left(\frac{1}{2}+
	4\log\left(\frac{2}{\eta\epsilon}\right)\right)\right).
	\end{equation}
	We now argue that under suitable choices of $\eta,\epsilon$, the expression
	above is $-\Omega((1+\eta)^{2T}(\epsilon/p)$. For example, this is satisfied if $\eta = \frac{1}{b\sqrt{pT}}$,
	and we pick $\epsilon = \frac{c\log(T)b\sqrt{p}}{\sqrt{T}}$ for some sufficiently large constant
	$c$. Under these choices, the expression inside the main parentheses above
	becomes
	\[
	-c\frac{\log(T)b}{\sqrt{pT}}+\left(1+\frac{1}{b\sqrt{pT}}\right)^{-2T}
	+b\sqrt{\frac{p}{T}}\left(1-\frac{c\log(T)}{2T}\right)^{2T}+\frac{b}{\sqrt{pT}}\left(1+\frac{1}{pT}\right)^T\left(\frac{1}{2}+
	4\log\left(\frac{2T}{c\log(T)}\right)\right).
	\]
	Using the facts that $(1-a/t)^t\leq \exp(-a)$ for all positive $t,a$ such that $a/t<1$, and that $c\log(T)/2T<1$ by the assumption that $\epsilon\leq 1$, the above is at most
	\begin{align*}
	&-c\frac{\log(T)b}{\sqrt{pT}}
	+\frac{b}{\sqrt{pT}}\left(p\exp(-c\log(T))+\exp(1/p)\left(\frac{1}{2}+
	4\log\left(\frac{2T}{c\log(T)}\right)\right)\right)+\left(1+\frac{1}{b\sqrt{pT}}\right)^{-2T}\\
	&=~c\frac{\log(T)b}{\sqrt{pT}}\left(-1+\frac{p}{c\log(T)T^c}+\frac{\exp(1/p)}{c\log(T)}
	\left(\frac{1}{2}+
	4\log\left(\frac{2T}{c\log(T)}\right)\right)\right)+\left(1+\frac{1}{b\sqrt{pT}}\right)^{-2T}.
	\end{align*}
	Note that $p,b\geq 1$ by assumption, and that we can assume $T\geq p$ (by the assumption that $\epsilon\leq 1$). Therefore, picking $c$ sufficiently large ensures that the above is at most
	\[
	c\frac{\log(T)b}{\sqrt{pT}}\left(-\frac{1}{2}\right)+\left(1+\frac{1}{b\sqrt{pT}}\right)^{-2T}.
	\]
	The second term is exponentially small in $T$, and in particular can be verified to be less
	than $\frac{1}{4}c\frac{\log(T)b}{\sqrt{pT}}$ in the regime where
	$\epsilon=c\frac{\log(T)b\sqrt{p}}{\sqrt{T}}$ is at most $1$ (assuming $c$ is large enough). Overall, we
	get a bound of $-c\frac{\log(T)b}{\sqrt{pT}}\cdot\frac{1}{4} = -\frac{\epsilon}{4p}$. Plugging
	this back into \eqref{eq:vff}, the result follows.
\end{proof}

Having proved an upper bound on $\E[V_T]$, we now turn to prove a
high-probability lower bound on $V_T$. The proof is based on relating $V_T$ to $\norm{\bw_T}^2$, and then performing a rather straightforward martingale analysis of $\log(\norm{\bw_T}^2)$. 

\begin{lemma}\label{lem:tail}
	Suppose that $\tilde{A}_t$ is positive semidefinite for all $t$, and $\Pr(\norm{\tilde{A}_t}\leq b)=1$. Then for any $\delta\in
	(0,1)$, we have with probability at least $1-\delta$ that
	\[
	V_T > -\exp\left(\eta b\sqrt{T\log(1/\delta)}+(b^2+3)T\eta^2\right)(1+\eta)^{2T}\epsilon.
	\]
\end{lemma}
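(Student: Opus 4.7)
The plan is to reduce the desired lower bound on $V_T$ to an upper bound on $\norm{\bw_T}^2$ via a simple spectral observation, and then control that norm via a direct martingale analysis of $\log\norm{\bw_T}^2$. The reduction is immediate: under the normalization $\norm{A}=1$ and $A\succeq 0$ used throughout \subsecref{subsec:proofmain}, we have $0\preceq A\preceq I$, so the smallest eigenvalue of $(1-\epsilon)I - A$ is at least $-\epsilon$. This yields $V_T \geq -\epsilon\norm{\bw_T}^2$, reducing the claim to proving
\[
\norm{\bw_T}^2 \leq \exp\bigl(\eta b\sqrt{T\log(1/\delta)} + (b^2+3)T\eta^2\bigr)(1+\eta)^{2T}
\]
with probability at least $1-\delta$.

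To bound $\norm{\bw_T}^2$, I would set $\bu_{t-1} = \bw_{t-1}/\norm{\bw_{t-1}}$ and expand
\[
\norm{\bw_t}^2 = \norm{\bw_{t-1}}^2\bigl(1 + 2\eta\,\bu_{t-1}^\top \tilde{A}_t\bu_{t-1} + \eta^2\bu_{t-1}^\top\tilde{A}_t^2\bu_{t-1}\bigr).
\]
Since $\tilde{A}_t\succeq 0$, the factor in parentheses is positive, so taking logs, telescoping, and using $\log(1+x)\leq x$ gives
\[
\log\norm{\bw_T}^2 \leq \sum_{t=1}^T\bigl(2\eta\,\bu_{t-1}^\top\tilde{A}_t\bu_{t-1} + \eta^2\bu_{t-1}^\top\tilde{A}_t^2\bu_{t-1}\bigr).
\]
Splitting $\tilde{A}_t = A + (\tilde{A}_t - A)$ in the linear term, this decomposes into a deterministic piece bounded by $2\eta T\norm{A} = 2\eta T$, a martingale-difference sum $\sum_t \xi_t$ with $\xi_t = 2\eta\,\bu_{t-1}^\top(\tilde{A}_t - A)\bu_{t-1}$ and $|\xi_t|\leq 2\eta\norm{\tilde{A}_t - A}\leq 2\eta b$ almost surely, and a second-order piece bounded by $\eta^2 Tb^2$ using $\norm{\tilde{A}_t}\leq b$.

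To conclude, I would apply Azuma--Hoeffding to $\sum_t \xi_t$ to get a high-probability bound of order $\eta b\sqrt{T\log(1/\delta)}$, and convert the deterministic $2\eta T$ into the desired $2T\log(1+\eta)$ using $\log(1+\eta)\geq \eta - \eta^2/2$, absorbing the $T\eta^2$ slack into the $(b^2+3)T\eta^2$ exponent together with the second-order piece. Exponentiating gives the claimed bound on $\norm{\bw_T}^2$, which plugs into the reduction to finish. The main obstacle, and essentially the only non-mechanical step, is squeezing the constants in the exponent to match the stated form — in particular getting the leading coefficient $\eta b\sqrt{T\log(1/\delta)}$ rather than $2\sqrt{2}\,\eta b\sqrt{T\log(1/\delta)}$. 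This calls for a slightly tighter one-sided Azuma exploiting the sharper range $\xi_t\in [-2\eta\,\bu_{t-1}^\top A\bu_{t-1},\,2\eta(b-\bu_{t-1}^\top A\bu_{t-1})]$, of width only $2\eta b$ rather than the $4\eta b$ implied by $|\xi_t|\leq 2\eta b$; any remaining constant surplus can be folded into the $(b^2+3)T\eta^2$ term.
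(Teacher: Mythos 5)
Your argument follows the same route as the paper's: the reduction $V_T\geq -\epsilon\norm{\bw_T}^2$ (valid since under the normalization $\norm{A}=1$ one has $(1-\epsilon)I-A\succeq -\epsilon I$), the telescoping expansion of $\log\norm{\bw_T}^2$, passing to a sum via $\log(1+x)\leq x$, an Azuma bound, and the conversion of the linear term $2\eta T$ back into $2T\log(1+\eta)$ with the leftover folded into $(b^2+3)T\eta^2$. The only structural difference is how you isolate the martingale: you split $\tilde A_t=A+(\tilde A_t-A)$ inside each increment and declare $\xi_t=2\eta\,\bu_{t-1}^\top(\tilde A_t-A)\bu_{t-1}$ to be the martingale difference, bounding the $A$-part and the quadratic part deterministically, whereas the paper keeps each increment $\norm{(I+\eta\tilde A_t)\bw_t}^2/\norm{\bw_t}^2-1$ whole, bounds its conditional mean by $2\eta+(b^2+1)\eta^2$, and applies Azuma to the centered sum. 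These are two bookkeepings of the same martingale argument, with yours the more explicit of the two.

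You are right to flag the Azuma constant as the delicate point, and in fact the $\sqrt2$ does not go away: even with your sharper observation that $\xi_t$ lies in a conditional interval of width $2\eta b$ (not $4\eta b$), Hoeffding--Azuma yields a deviation $\eta b\sqrt{2T\log(1/\delta)}$, a residual $\sqrt2$ over the stated term. The paper's proof appears to get the stated constant only by asserting the per-step ratio $\norm{(I+\eta\tilde A_t)\bw_t}^2/\norm{\bw_t}^2$ lies in $[1,1+\eta b]$; since $\norm{(I+\eta\tilde A_t)^2}\leq(1+\eta b)^2$, it actually lies in $[1,1+2\eta b+\eta^2 b^2]$, an interval roughly twice as wide, and this is exactly the factor you were chasing. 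So the surplus is a slip in the paper's bookkeeping, not a gap in your argument. Your suggested fix of absorbing it into the $T\eta^2$ term only works when $\log(1/\delta)$ is small relative to $T\eta^2/(\eta b\sqrt T)$; the cleaner route is to state the lemma with $\sqrt2\,\eta b\sqrt{T\log(1/\delta)}$, which in the proof of \thmref{thm:main} only rescales numerical constants, since there $\eta b\sqrt T=1/\sqrt p$ and $T\eta^2=1/(b^2 p)$.
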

\begin{proof}
	Since $I-A$ is a positive semidefinite matrix, we have
	\[
	V_T = \bw_T^\top ((1-\epsilon)I-A)\bw_T \geq -\epsilon \norm{\bw_T}^2.
	\]
	Thus, it is sufficient to prove that
	\begin{equation}\label{eq:wtwhatneed}
	\norm{\bw_T}^2 < \exp\left(\eta b\sqrt{T\log(1/\delta)}+(b^2+3)T\eta^2\right)(1+\eta)^{2T}.
	\end{equation}
	
	The proof goes through a martingale argument. We have
	\begin{align*}
	\log(\norm{\bw_T}^2) &= \log\left(\prod_{t=0}^{T-1}\frac{\norm{\bw_{t+1}}^2}{\norm{\bw_t}^2}\right)\\
	&= \sum_{t=0}^{T-1}\log\left(\frac{\norm{\bw_{t+1}}^2}{\norm{\bw_t}^2}\right)\\
	&= \sum_{t=0}^{T-1}\log\left(\frac{\norm{(I+\eta\tilde{A}_t)\bw_t}^2}{\norm{\bw_t}^2}\right)\\
	&= \sum_{t=0}^{T-1}\log\left(1+\left(\frac{\norm{(I+\eta\tilde{A}_t)\bw_t}^2}{\norm{\bw_t}^2}-1\right)\right).
	\end{align*}
	Note that since $\tilde{A}_t$ is positive semidefinite, we always have
	$(1+\eta b)\norm{\bw_t}^2\geq \norm{(I+\eta\tilde{A}_t)\bw_t}^2\geq \norm{\bw_t}^2$, and therefore each
	summand is of the form $\log(1+a_t)$ where $a_t\in [0,\eta b]$.
	Using the identity $\log(1+a)\leq a$ for any non-negative $a$, we can upper bound the above by
	\begin{equation}\label{eq:martsum}
	\sum_{t=0}^{T-1}\left(\frac{\norm{(I+\eta\tilde{A}_t)\bw_t}^2}{\norm{\bw_t}^2}-1\right).
	\end{equation}
	Based on the preceding discussion, this is a sum of random variables
	bounded in $[0,\eta b]$, and the expectation of the $t$-th summand over $\tilde{A}_t$,
	conditioned on $\tilde{A}_1,\ldots,\tilde{A}_{t-1}$, equals
	\begin{align*}
	&\frac{\bw_t^\top \E\left[(I+\eta\tilde{A}_t)^\top(I+\eta\tilde{A}_t)\right]\bw_t}{\norm{\bw_t}^2}-1\\
	&=~\frac{\bw_t^\top \left((I+\eta A)^2+\eta^2\left(\tilde{A}_t^\top\tilde{A}_t-A^2\right)\right)\bw_t}{\norm{\bw_t}^2}-1\\
	&\leq~\frac{\bw_t^\top (I+\eta A)^2\bw_t}{\norm{\bw_t}^2}+\eta^2\frac{\bw_t^\top\tilde{A}_t^\top\tilde{A}_t\bw_t}{\norm{\bw_t}^2}-1\\
	&\leq \norm{(I+\eta A)^2}+\eta^2\norm{\tilde{A}_t^\top\tilde{A}_t}-1\\
	&\leq (1+\eta)^2+\eta^2\norm{\tilde{A}_t}^2-1\\
	&\leq 2\eta+(b^2+1)\eta^2.
	\end{align*}
	Using Azuma's inequality, it follows that with probability at least
	$1-\delta$, \eqref{eq:martsum} is at most
	\[
	T\left(2\eta+(b^2+1)\eta^2\right)+\eta b \sqrt{T\log(1/\delta)}.
	\]
	
	Combining the observations above, and the fact that $\log(1+z)\geq z-z^2$ for any $z\geq 0$, we get that with probability at least $1-\delta$,
	\begin{align*}
	\log(\norm{\bw_T}^2) &<~ 2T\eta+(b^2+1)T\eta^2+\eta b\sqrt{T\log(1/\delta)}\\
	&=~ \eta b\sqrt{T\log(1/\delta)}+(b^2+3)T\eta^2+2T(\eta-\eta^2)\\
	&\leq~ \eta b\sqrt{T\log(1/\delta)}+(b^2+3)T\eta^2+2T\log(1+\eta),
	\end{align*}
	and therefore
	\[
	\norm{\bw_T}^2 < \exp\left(\eta b\sqrt{T\log(1/\delta)}+(b^2+3)T\eta^2\right)(1+\eta)^{2T},
	\]
	which establishes \eqref{eq:wtwhatneed} and proves the lemma.
\end{proof}

We now have most of the required components to prove \thmref{thm:main}.
First, we showed in \lemref{lem:wrw} that if
$\eta=\frac{1}{b}\sqrt{\frac{1}{pT}}$, then
\begin{equation}\label{eq:vvv1}
\E[V_T]\leq -(1+\eta)^{2T}\frac{\epsilon}{4p}.
\end{equation}
for $\epsilon = \Ocal(b\log(T)\sqrt{p/T})$. Using the same step size $\eta$,
\lemref{lem:tail} implies that
\[
\Pr\left(V_T \leq -\exp\left(\sqrt{\frac{\log(1/\delta)}{p}}+\frac{1+3/b^2}{p}\right)(1+\eta)^{2T}\epsilon\right) \leq \delta,
\]
and since we assume $b\geq 1$ (hence $1+3/b^2\leq 4$), this implies that
\begin{equation}\label{eq:vvv2}
\Pr\left(-\frac{V_T}{\exp(4/p)(1+\eta)^{2T}\epsilon} \geq \exp\left(\sqrt{\frac{\log(1/\delta)}{p}}\right)\right) \leq \delta.
\end{equation}
Now, define the non-negative random variable
\[
R_T = \max\left\{0,-\frac{V_T}{\exp(4/p)(1+\eta)^{2T}\epsilon}\right\},
\]
and note that by its definition, $\E[R_T] \geq \E\left[-\frac{V_T}{\exp(4/p)(1+\eta)^{2T}\epsilon}\right]$ and $\Pr\left(R_T\geq \exp\left(\sqrt{\frac{\log(1/\delta)}{p}}\right)\right)$ equals $\Pr\left(-\frac{V_T}{\exp(4/p)(1+\eta)^{2T}\epsilon}\geq\exp\left(\sqrt{\frac{\log(1/\delta)}{p}}\right)\right)$. Using 
\eqref{eq:vvv1} and \eqref{eq:vvv2}, this implies that
\[
\E[R_T]\geq \frac{1}{4p\exp(4/p)}~~~,~~~\Pr\left(R_T \geq \exp\left(\sqrt{\frac{\log(1/\delta)}{p}}\right)\right) \leq \delta.
\]
To summarize the development so far, we defined a non-negative random
variable $R_T$, which is bounded with high probability, yet its expectation
is at least $\Omega(1/p)$. The following lemma shows that for a bounded
non-negative random variable with ``large'' expectation, the probability of it being on the same order as its expectation cannot be too small:
\begin{lemma}\label{lem:inverse}
	Let $X$ be a non-negative random variable such that for some $\alpha,\beta\in [0,1]$, we have $\E[X]\geq \alpha$, and for any $\delta\in (0,1]$,
	\[
	\Pr\left(X\geq \exp\left(\beta\sqrt{\log(1/\delta)}\right)\right)\leq \delta.
	\]
	Then
	\[
	\Pr\left(X>\frac{\alpha}{2}\right) ~\geq~ \frac{\alpha-\exp\left(-\frac{2}{\beta^2}\right)}{15}.
	\]
\end{lemma}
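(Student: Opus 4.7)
The plan is to prove the lemma by a truncation / layer-cake argument applied to $\E[X]$. The first step is to rewrite the tail hypothesis in a more convenient exponential form: substituting $t = \exp(\beta\sqrt{\log(1/\delta)})$, equivalently $\delta = \exp(-(\log t)^2/\beta^2)$, the assumption becomes
\[
\Pr(X \geq t) \leq \exp\!\left(-\frac{(\log t)^2}{\beta^2}\right) \qquad \text{for all } t \geq 1.
\]
So $X$ satisfies a log-normal-type tail. I would then use the identity $\E[X] = \int_0^\infty \Pr(X > t)\,dt$ together with the trivial decomposition
\[
\E[X] \leq \frac{\alpha}{2} + c\cdot \Pr\!\left(X > \tfrac{\alpha}{2}\right) + \int_c^\infty \Pr(X > t)\,dt,
\]
valid for any $c \geq \alpha/2$. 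Combined with the hypothesis $\E[X] \geq \alpha$, this rearranges to
\[
\Pr\!\left(X > \tfrac{\alpha}{2}\right) \geq \frac{1}{c}\left(\frac{\alpha}{2} - \int_c^\infty \Pr(X > t)\,dt\right),
\]
so everything reduces to controlling the tail integral by an absolute-constant multiple of $\exp(-2/\beta^2)$ for a cleverly chosen absolute-constant threshold $c$.

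To estimate the tail integral, I would substitute $u = \log t$, bound using the hypothesis, and complete the square in the exponent $u - u^2/\beta^2 = \beta^2/4 - (u - \beta^2/2)^2/\beta^2$, after which the standard Gaussian tail inequality $\int_a^\infty e^{-w^2}\,dw \leq e^{-a^2}/(2a)$ (applied at $a = (\log c - \beta^2/2)/\beta > 0$) gives
\[
\int_c^\infty \Pr(X > t)\,dt \leq \frac{c\,\beta^2}{2\log c - \beta^2}\exp\!\left(-\frac{(\log c)^2}{\beta^2}\right),
\]
provided $\log c > \beta^2/2$. Adding the contribution $c\Pr(X \geq c) \leq c\exp(-(\log c)^2/\beta^2)$ that shows up when one wants to bound $\E[X\mathbf{1}_{X > c}]$ instead of the raw integral, one gets
\[
\E[X\mathbf{1}_{X > c}] \leq c\exp\!\left(-\frac{(\log c)^2}{\beta^2}\right)\cdot\frac{2\log c}{2\log c - \beta^2}.
\]

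The last step is to choose $c$ as an absolute constant (slightly larger than $e^{\sqrt{2}}$, so that $(\log c)^2$ is a little above $2$) so that, uniformly over $\beta \in (0,1]$, the prefactor is small enough to give $\E[X\mathbf{1}_{X>c}] \leq \tfrac{1}{2}\exp(-2/\beta^2)$. Plugging back into the displayed lower bound on $\Pr(X > \alpha/2)$ yields
\[
\Pr\!\left(X > \tfrac{\alpha}{2}\right) \geq \frac{\alpha - \exp(-2/\beta^2)}{2c},
\]
and choosing $c$ so that $2c \leq 15$ produces the stated bound. The only real obstacle is bookkeeping: the Gaussian tail bound requires $\log c > \beta^2/2$, and since $\beta$ is only assumed bounded by $1$, one has to pick $c$ and track the $\beta$-dependent prefactor $2\log c/(2\log c - \beta^2)$ so that the final estimate is uniform in $\beta\in (0,1]$ and the constants cleanly absorb into the claimed $15$ in the denominator. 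No new conceptual ideas are needed beyond the truncation combined with the log-normal tail computation.
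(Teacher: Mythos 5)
Your plan is the same layer-cake-plus-truncation argument the paper uses, and it works; the only genuine difference is in how you estimate the tail integral $\int_c^\infty \Pr(X>t)\,dt$: you substitute $u=\log t$, complete the square, and invoke the Gaussian Mills-ratio bound, whereas the paper substitutes $y=(\log z/\beta)^2$ and uses the elementary estimate $\beta\sqrt y - y\le -y/2$ (valid once $\sqrt y\ge 2\beta$). Both yield a bound of the form $(\text{poly prefactor in }\beta)\cdot\exp(-(\log c)^2/\beta^2)$, and in fact your completed-square version, $\int_c^\infty \Pr(X>t)\,dt\le \frac{c\beta^2}{2\log c-\beta^2}e^{-(\log c)^2/\beta^2}$, is slightly sharper than the paper's $\frac{\beta}{\log c}e^{-(\log c)^2/(2\beta^2)}$ at the paper's choice $c=e^2$.

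Two bookkeeping warnings, since you flagged bookkeeping as the remaining obstacle. First, drop the detour through $\E[X\mathbf{1}_{X>c}]$ and use the raw tail integral directly: the extra $c\Pr(X\ge c)$ term costs you a multiplicative factor of roughly $2\log c/\beta^2$ that forces $c$ up to about $e^{2.5}$, at which point $2c>15$ and the stated constant no longer follows. With the raw integral bound the worst case over $\beta\in(0,1]$ is $\beta=1$, where one checks $\frac{e^2}{3}e^{-2}=\tfrac13\le\tfrac12$, so $c=e^2$ suffices and $2c=2e^2<15$ as needed. Second, ``slightly larger than $e^{\sqrt2}$'' is too small: at $c\approx e^{\sqrt2}$ the requirement $\frac{c\beta^2}{2\log c-\beta^2}e^{(2-(\log c)^2)/\beta^2}\le\tfrac12$ fails at $\beta=1$, since the exponent is essentially zero and the prefactor alone already exceeds $1$. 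You actually need $\log c$ around $2$, i.e. $c\approx e^2$ --- which, not coincidentally, is exactly the paper's choice of $r_2$.
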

Before proving the lemma, let us show to use it to prove \thmref{thm:main}.
Applying it on the random variable $R_T$, which satisfies the lemma
conditions with $\alpha = \frac{1}{4p\exp(4/p)}$, $\beta=\sqrt{\frac{1}{p}}$,
we have
\begin{align*}
\frac{1}{15}\left(\frac{1}{4p\exp(4/p)}-\exp\left(-2p\right)\right)~&\leq~\Pr\left(R_T > \frac{1}{8p\exp(4/p)}\right) \\
&=~\Pr\left(\max\left\{0,-\frac{V_T}{\exp(4/p)(1+\eta)^{2T}\epsilon}\right\} > \frac{1}{8p\exp(4/p)}\right)\\
&=~\Pr\left(-\frac{V_T}{\exp(4/p)(1+\eta)^{2T}\epsilon} > \frac{1}{8p\exp(4/p)}\right)\\
&=~\Pr\left(V_T\leq -\frac{(1+\eta)^{2T}\epsilon}{8p}\right)\\
&\leq~\Pr\left(V_T\leq 0\right)\\
\end{align*}
$\frac{1}{15}\left(\frac{1}{4p\exp(4/p)}-\exp\left(-2p\right)\right)$
can be verified to be at least $\frac{1}{100 p}$ for any $p\geq 8$, hence we
obtained
\[
\Pr(V_T\leq 0)\geq \frac{1}{100 p}.
\]
As discussed at the beginning of the proof, $V_T\leq 0$ implies that
\[
\frac{\bw_T(I-A)\bw_T}{\norm{\bw_T}^2}\leq \epsilon,
\]
where $\epsilon = c\frac{\log(T)b\sqrt{p}}{\sqrt{T}}$ is the value chosen in
\lemref{lem:wrw}, and the theorem is established.

All that remains now is to prove \lemref{lem:inverse}. To explain the intuition, suppose that $X$ in the lemma was actually at most $1$ with probability $1$, rather than just bounded with high probability. Then we would have
\begin{align*}
\alpha&\leq \E[X]~=~\Pr\left(X\geq\frac{\alpha}{2}\right)\E\left[X|X\geq\frac{\alpha}{2}\right]+\Pr\left(X<\frac{\alpha}{2}\right)\E\left[X|X\leq \frac{\alpha}{2}\right]\\
&\leq \Pr\left(X\geq \frac{\alpha}{2}\right)\cdot 1+\Pr\left(X<\frac{\alpha}{2}\right)\cdot \frac{\alpha}{2}\\
&= \Pr\left(X\geq\frac{\alpha}{2}\right)+\left(1-\Pr\left(X\geq \frac{\alpha}{2}\right)\right)\frac{\alpha}{2},
\end{align*}
which implies that 
\[\alpha~\leq~ \left(1-\frac{\alpha}{2}\right)\Pr\left(X\geq\frac{\alpha}{2}\right)+\frac{\alpha}{2}~~\Longrightarrow~~ \Pr\left(X\geq \frac{\alpha}{2}\right)~\geq~ \frac{\alpha/2}{1-\alpha/2}~\geq~\frac{\alpha}{2}.
\]
Therefore, $X$ is at least one-half its expectation lower bound ($\alpha$) with probability at least $\alpha/2$. The proof of \lemref{lem:inverse}, presented below, follows the same intuition, but uses a more delicate analysis since $X$ is actually only upper bounded with high probability.

\begin{proof}[Proof of \lemref{lem:inverse}]
	Inverting the bound in the lemma, we have that for any $z\in [1,\infty)$,
	\[
	\Pr(X\geq z)\leq \exp(-(\log(z)/\beta)^2).
	\]
	Now, let $r_2>r_1>0$, be parameters to be chosen later. We have
	\begin{align}
	\E[X] = \int_{z=0}^{\infty}\Pr(X>z)dz &= \int_{z=0}^{r_1}\Pr(X>z)dz+\int_{z=r_1}^{r_2}\Pr(X>z)dz+\int_{z=r_2}^{\infty}\Pr(X>z)dz\notag\\
	&\leq r_1+(r_2-r_1)\Pr(X>r_1)+\int_{z=r_2}^{\infty}\exp(-(\log(z)/\beta)^2)dz\label{eq:rrr}
	\end{align}
	Performing the variable change $y=(\log(z)/\beta)^2$ (which implies
	$z=\exp(\beta\sqrt{y})$ and
	$dy=\frac{2\sqrt{y}}{\exp(\beta\sqrt{y})}dz$), we get
	\begin{align*}
	\int_{z=r_2}^{\infty}\exp(-(\log(z)/\beta)^2)dz &= \int_{y=\left(\frac{\log(r_2)}{\beta}\right)^2}^{\infty}\frac{1}{2\sqrt{y}}\exp(\beta\sqrt{y}-y)dy\\
	&\leq \frac{\beta}{2\log(r_2)}\int_{y=\left(\frac{\log(r_2)}{\beta}\right)^2}^{\infty}\exp(\beta\sqrt{y}-y)dy.
	\end{align*}
	Suppose that we choose $r_2\geq \exp(2\beta^2)$. Then $\frac{\log(r_2)}{2\beta}\geq
	\beta$, which implies that for any $y$ in the integral above, $\frac{1}{2}\sqrt{y}\geq
	\beta$, and therefore $\beta\sqrt{y}-y\leq \frac{1}{2}y-y=-\frac{1}{2}y$. As a result, we can upper bound the above by
	\[
	\frac{\beta}{2\log(r_2)}\int_{y=\left(\frac{\log(r_2)}{\beta}\right)^2}^{\infty}\exp\left(-\frac{1}{2}y\right)dy
	~=~ \frac{\beta}{\log(r_2)}\exp\left(-\frac{\log^2(r_2)}{2\beta^2}\right).
	\]
	Plugging this upper bound back into \eqref{eq:rrr}, extracting
	$\Pr(X>r_1)$, and using the assumption $\E[X]\geq \alpha$, we get that
	\[
	\Pr(X>r_1) ~\geq~ \frac{\alpha-r_1-\frac{\beta}{\log(r_2)}\exp\left(-\frac{\log^2(r_2)}{2\beta^2}\right)}{r_2-r_1}.
	\]
	Choosing $r_1=\alpha/2$ and $r_2=\exp(2)$ (which ensures $r_2\geq
	\exp(2\beta^2)$ as assumed earlier, since $\beta\leq 1$), we get
	\[
	\Pr\left(X>\frac{\alpha}{2}\right) ~\geq~ \frac{\alpha-\beta\exp\left(-\frac{2}{\beta^2}\right)}{2\exp(2)-\alpha}.
	\]
	Since $\beta,\alpha\leq 1$, and $2\exp(2)< 15$, this can be simplified to
	\[
	\Pr\left(X>\frac{\alpha}{2}\right) ~\geq~ \frac{\alpha-\exp\left(-\frac{2}{\beta^2}\right)}{15}.
	\]
\end{proof}

\subsection{Proof of \lemref{lem:improvestart}}\label{subsec:prooflemimprovestart}

  Define $\Delta = \norm{\tilde{A}-A}$. Also, let $s_1\geq s_2\geq\ldots\geq s_d\geq 0$ be the $d$ eigenvalues of $A$, with eigenvectors $\bv_1,\ldots,\bv_d$, where we assume that $\bv=\bv_1$. Using the facts $(x+y)^2\leq 2x^2+2y^2$ and $\norm{\bv_1}=1$, we have
  \begin{align*}
  \frac{1}{\inner{\bv_1,\bw_0}^2} &= \frac{\norm{\tilde{A}\bw}^2}{\inner{\bv_1,\tilde{A}\bw}^2} ~=~
  \frac{\norm{A\bw+(\tilde{A}-A)\bw}^2}{\left(\inner{\bv_1,A\bw}+\inner{\bv_1,(\tilde{A}-A)\bw}\right)^2}\\
  &\leq~ \frac{2\norm{A\bw}^2+2\norm{(\tilde{A}-A)\bw}^2}{\inner{\bv_1,A\bw}^2+2\inner{\bv_1,A\bw}\inner{\bv_1,(\tilde{A}-A)\bw}}
  ~\leq~
  \frac{2\norm{A\bw}^2+2\norm{\bw}^2\Delta^2}{\inner{\bv_1,A\bw}^2-2|\inner{\bv_1,A\bw}|\norm{\bw}\Delta},
  \end{align*}
  where we implicitly assume that $\Delta$ is sufficiently small for the denominator to be positive (eventually, we will pick $T_0$ large enough to ensure this).
  
  Recall that $\bv_1,\ldots,\bv_d$ forms an orthonormal basis for $\reals^d$, so $\bw=\sum_{i=1}^{d}\bv_i\inner{\bv_i,\bw}$. Therefore, we can write the above as
  \begin{align*}
  \frac{2\left(\sum_{i=1}^{d}s_i\bv_i\inner{\bv_i,\bw}\right)^2+2\norm{\bw}^2\Delta^2}{\left(s_1\inner{\bv_1,\bw}\right)^2-2|s_1\inner{\bv_1,\bw}|\norm{\bw}\Delta}
  &~=~ \frac{2\sum_{i=1}^{d}s_i^2\inner{\bv_i,\bw}^2+2\norm{\bw}^2\Delta^2}{s_1^2 \inner{\bv_1,\bw}^2-2|s_1\inner{\bv_1,\bw}|\norm{\bw}\Delta}\\
  &~\leq~
  \frac{2\left(\sum_{i=1}^{d}s_i^2\right)\left(\max_i\inner{\bv_i,\bw}^2\right)+2\norm{\bw}^2\Delta^2}{s_1^2 \inner{\bv_1,\bw}^2-2|s_1\inner{\bv_1,\bw}|\norm{\bw}\Delta}.
  \end{align*}
  To simplify notation, since $\bw$ is drawn from a standard Gaussian distribution, which is rotationally invariant, we can assume without loss of generality that $(\bv_1,\ldots,\bv_d)=(\be_1,\ldots,\be_d)$, the standard basis, so the above reduces to
  \[
  \frac{2\left(\sum_{i=1}^{d}s_i^2\right)\max_i w_i^2+2\norm{\bw}^2\Delta^2}{s_1^2 w_1^2-2|s_1w_1|\norm{\bw}\Delta}.
  \]
  Recall that $\Delta=\norm{\tilde{A}-A}$, where $\tilde{A}$ is the average of $T_0$ independent random matrices with mean $A$, and spectral norm at most $\norm{A}b$.
  Using a Hoeffding matrix bound (e.g. \cite{tropp2012user}), and the fact that $\norm{A}=s_1$, it follows that with probability at least $1-\delta$,
  \[
  \Delta\leq \norm{A}b\sqrt{\frac{8\log(d/\delta)}{T_0}} ~=~ s_1 \sqrt{\frac{8b^2\log(d/\delta)}{T_0}}.
  \]
  Plugging into the above, we get an upper bound of
  \[
  \frac{2\left(\sum_{i=1}^{d}s_i^2\right)\max_i w_i^2+\norm{\bw}^2s_1^2 \frac{16b^2\log(d/\delta)}{T_0}}{s_1^2 w_1^2-2s_1^2|w_1|\norm{\bw} \sqrt{\frac{8b^2\log(d/\delta)}{T_0}}},
  \]
  holding with probability at least $1-\delta$. Dividing both numerator and denominator by
  $s_1^2$, and recalling that $n_A = \frac{\norm{A}_F^2}{\norm{A}^2} = \frac{\sum_{i=1}^{d}s_i^2}{s_1^2}$, the above equals
  \begin{equation}\label{eq:ndnd}
  \frac{2n_A \max_i w_i^2+\norm{\bw}^2\frac{16b^2\log(d/\delta)}{T_0}}{w_1^2-2|w_1|\norm{\bw} \sqrt{\frac{8b^2\log(d/\delta)}{T_0}}}
  ~=~
  \frac{2n_A \max_i w_i^2+\norm{\bw}^2\frac{16b^2\log(d/\delta)}{T_0}}{|w_1|\left(|w_1| -2\norm{\bw}\sqrt{\frac{8b^2\log(d/\delta)}{T_0}}\right)}.
  \end{equation}
  Based on standard Gaussian concentration arguments, it holds that
  \[
  \Pr\left(w_1^2\leq \frac{1}{8}\right)\leq \frac{3}{10}~~,~~
  \Pr\left(\max_i w_i^2\geq 18\log(d)\right)\leq \frac{1}{d}~~,~~
  \Pr\left(\norm{\bw}\geq \sqrt{2d}\right) \leq \exp\left(-\frac{d}{8}\right).
  \]
  (see for instance the proof of Lemma 1 in \cite{shamir2015fast}, and Corollary 2.3 in \cite{barvinok2005}). Combining the above with a union bound, it holds that with probability at least $1-\delta-\frac{3}{10}-\frac{1}{d}-\exp(-d/8)$, \eqref{eq:ndnd} is at most \[
  \frac{36\log(d)n_A+\frac{32db^2\log(d/\delta)}{T_0}}{\frac{1}{8}\left(\frac{1}{8} -2\sqrt{2}\sqrt{\frac{8db^2\log(d/\delta)}{T_0}}\right)}.
  \]
  Recalling that this is an upper bound on $\frac{1}{\inner{\bv_1,\bw_0}^2}$, picking $\delta=1/d$ for simplicity, and slightly simplifying, we showed that with probability at least $\frac{7}{10}-\frac{2}{d}-\exp(-d/8)$,
  \[
  \frac{1}{\inner{\bv_1,\bw_0}^2}~\leq~ \frac{36\log(d)n_A+\frac{64 b^2\log(d)}{T_0}}{\frac{1}{8}\left(\frac{1}{8} -8\sqrt{2}\sqrt{\frac{db^2\log(d)}{T_0}}\right)}.
  \]
  Since $n_A\geq 1$, then by picking $T_0\geq cdb^2\log(d)$ for a sufficiently large constant $c$, we get that $\frac{1}{\inner{\bv_1,\bw_0}^2}\leq c'\log(d)n_A$ for a numerical constant $c'$, as required.

\subsection{Proof of \thmref{thm:gap}}\label{subsec:proofgap}

The proof is very similar to that of \thmref{thm:main}, using some of the same lemmas, and other lemmas having slight differences to take advantage of the eigengap assumption. Below, we focus on the differences, referring to parts of the proof of \thmref{thm:main} where necessary.

First, as in the proof of \thmref{thm:main}, we assume that we work in a coordinate system where
$A$ is diagonal, $A=\text{diag}(s_1,\ldots,s_d)$, where $s_1\geq
s_2\geq\ldots\geq s_d\geq 0$, and $s_1$ is the eigenvalue corresponding to
$\bv$. By the eigengap assumption, we can assume that $s_2,\ldots,s_d$ are all at most $1-\lambda$ for some strictly positive $\lambda\in (0,1]$. Under these assumptions,
the theorem's conditions reduce to:
\begin{itemize}
	\item $\frac{1}{w_{0,1}^2}\leq p$, for some $p\geq 8$
	\item $b\geq 1$ is an upper bound on
	$\norm{\tilde{A}_t},\norm{\tilde{A}_t-A}$,
\end{itemize}
and as in the proof of \thmref{thm:main}, it is enough to lower bound $\Pr(V_T\leq 0)$ where
\[
V_T = \bw_T^\top((1-\epsilon)I-A)\bw_T.
\]

We begin by a technical lemma, which bounds a certain quantity appearing later in the proofs:
\begin{lemma}\label{lem:regime_gap}
	Under the conditions of \thmref{thm:gap},
	\[
	\frac{\log^2(T)b^2}{\lambda^2 T}\leq \frac{1}{p} \leq 1.
	\]
\end{lemma}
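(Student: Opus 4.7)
The plan is to observe that Lemma \ref{lem:regime_gap} is essentially just an algebraic reformulation of the technical assumption $\frac{\log^2(T)b^2p}{\lambda T}\leq \frac{\log(T)b\sqrt{p}}{\sqrt{T}}$ imposed in \thmref{thm:gap}, combined with the standing assumption $p\geq 8$ inherited from \thmref{thm:main}.

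First I would dispatch the upper bound $\tfrac{1}{p}\leq 1$ immediately: since $p\geq 8$, we have $\tfrac{1}{p}\leq \tfrac{1}{8}\leq 1$. This is the trivial half.

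For the nontrivial half, I would start from the second bullet of the hypotheses of \thmref{thm:gap}, namely
\[
\frac{\log^2(T)\,b^2\,p}{\lambda T}~\leq~ \frac{\log(T)\,b\,\sqrt{p}}{\sqrt{T}}.
\]
Both sides are strictly positive (since $b,p,T,\lambda>0$), so I can divide both sides by $\frac{\log(T)\,b\,\sqrt{p}}{\sqrt{T}}$ to obtain $\frac{\log(T)\,b\,\sqrt{p}}{\lambda\sqrt{T}}\leq 1$, equivalently $\log(T)\,b\,\sqrt{p}\leq \lambda\sqrt{T}$. Squaring (both sides are non-negative) yields $\log^2(T)\,b^2\,p\leq \lambda^2 T$, which after dividing by $\lambda^2 T\,p$ is exactly the claimed inequality $\frac{\log^2(T)\,b^2}{\lambda^2 T}\leq \frac{1}{p}$.

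There is no real obstacle here: the lemma is a bookkeeping statement meant to be invoked later in the proof of \thmref{thm:gap} so that quantities of the form $\frac{\log(T)\,b}{\lambda\sqrt{T}}$ or $\frac{\eta b}{\sqrt{p}}$ can be controlled by $\frac{1}{\sqrt{p}}$, and in turn by constants. The only thing to be careful about is that the manipulations preserve the direction of the inequality, which they do because every quantity involved is positive.
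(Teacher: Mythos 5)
Your proof is correct and follows essentially the same route as the paper's: isolate the factor $\frac{\log(T)b}{\lambda\sqrt{T}}\leq\frac{1}{\sqrt{p}}$ from the assumed inequality and square. You additionally spell out the trivial upper bound $1/p\leq 1$ via $p\geq 8$, which the paper leaves implicit.
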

\begin{proof}
	By the assumption $\frac{\log^2(T)b^2p}{\lambda T}\leq \frac{\log(T)b\sqrt{p}}{\sqrt{T}}$, it follows that $\frac{\log(T)b}{\lambda\sqrt{T}}\leq \frac{1}{\sqrt{p}}$, and the result follows by squaring both sides.
\end{proof}

We now continue by presenting the following variant of \lemref{lem:wrw}:

\begin{lemma}\label{lem:wrw_gap}
	Under the conditions of \thmref{thm:gap}, if we pick $\eta = \frac{\log(T)}{\lambda T}\leq 1$ and $\epsilon = c\frac{\log^2(T)b^2p}{\lambda T}$ for some sufficiently large numerical constant $c$, then 
	\[
	\E[V_T] ~\leq~ -(1+\eta)^{2T}\frac{\epsilon}{4p}.
	\]
\end{lemma}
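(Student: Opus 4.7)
The plan is to mimic the proof of Lemma \ref{lem:wrw} as closely as possible, reusing its decomposition of $\E[V_T]$ (as written in \eqref{eq:bigsum}) and modifying only two places: the bound on the deterministic ``first term'' $\bw_0(I+\eta A)^{2T}((1-\epsilon)I-A)\bw_0$, where the eigengap can be exploited to replace the generic Lemma \ref{lem:s} estimate with a much sharper one; and the final verification that the new choices $\eta=\log(T)/(\lambda T)$ and $\epsilon = c\log^2(T)b^2 p/(\lambda T)$ make every piece sufficiently small.

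For the first term, I would diagonalize and split off the leading eigendirection, obtaining $-w_{0,1}^2(1+\eta)^{2T}\epsilon + \sum_{j\geq 2}(1+\eta s_j)^{2T}(1-\epsilon-s_j)w_{0,j}^2$. Using the eigengap $s_j\leq 1-\lambda$ for $j\geq 2$, together with $(1-\epsilon-s_j)\leq 1$ and $\sum_{j\geq 2}w_{0,j}^2\leq 1$, the second sum is at most $(1+\eta(1-\lambda))^{2T}$. Writing this as $(1+\eta)^{2T}$ times $((1+\eta(1-\lambda))/(1+\eta))^{2T}\leq \exp(-2T\eta\lambda/(1+\eta))\leq \exp(-T\eta\lambda)=1/T$ (using $\eta\leq 1$ and $T\eta\lambda=\log T$), the first term is at most $(1+\eta)^{2T}(-\epsilon/p+1/T)$. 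Since $\epsilon/p = c\log^2(T)b^2/(\lambda T)\geq 2/T$ for $c$ large enough (using $b\geq 1$, $\lambda\leq 1$, $\log T\geq 1$), this is at most $-(1+\eta)^{2T}\epsilon/(2p)$.

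For the second term, I would directly invoke the bound already derived in the proof of Lemma \ref{lem:wrw}, which, for any $\eta,\epsilon\in(0,1)$, gives a value at most $\eta b^2(1+\eta)^{2T}(1+(\eta b)^2)^T(\tfrac{1}{2}+4\log(2/(\eta\epsilon)))$. Under $\eta = \log(T)/(\lambda T)$, Lemma \ref{lem:regime_gap} gives $T(\eta b)^2 \leq 1/p \leq 1/8$, so $(1+(\eta b)^2)^T \leq e$; and a direct computation (using $\lambda \leq 1$, $b, p\geq 1$) shows $\log(2/(\eta\epsilon)) = O(\log T)$. The second term is therefore at most $(1+\eta)^{2T}\cdot O(b^2\log^2(T)/(\lambda T))$, which is at most $(1+\eta)^{2T}\epsilon/(4p)$ for $c$ large enough. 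Combining the two pieces yields $\E[V_T]\leq -(1+\eta)^{2T}\epsilon/(4p)$.

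The obstacles are essentially arithmetic: verifying that the various constants can be absorbed into a single sufficiently large $c$, and that $\log(2/(\eta\epsilon))$ remains $O(\log T)$ even when $\lambda$ and $p$ depend on $T$; both checks follow from Lemma \ref{lem:regime_gap} together with $\lambda\leq 1$ and $b,p\geq 1$. The conceptual content sits entirely in the first-term argument: the eigengap replaces the $O(1)$ worst-case $s$-factor from Lemma \ref{lem:s} by a sharper $O(1/T)$ factor, and this is exactly what lets the step size scale like $1/(\lambda T)$ and the final error like $1/(\lambda T)$ rather than $1/\sqrt T$.
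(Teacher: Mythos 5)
Your proposal is correct and follows essentially the same route as the paper's proof: the same decomposition inherited from Lemma~\ref{lem:wrw}, the same exploitation of the eigengap in the first (deterministic) term via $s_j\leq 1-\lambda$ for $j\geq 2$ leading to the $(1+\eta(1-\lambda))^{2T}\leq (1+\eta)^{2T}/T$ estimate, and the same reuse of the second-term bound from Lemma~\ref{lem:wrw} verified via Lemma~\ref{lem:regime_gap}. The only differences are cosmetic bookkeeping choices, such as splitting the final inequality into a $-\epsilon/(2p)$ and a $+\epsilon/(4p)$ piece rather than combining everything before picking $c$.
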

\begin{proof}
	By the exact same proof as in \lemref{lem:wrw} (up till \eqref{eq:bigsum}), we have
	\begin{align}
		\E[V_T] &=\E[\bw_T^\top ((1-\epsilon)I-A)\bw_T]\notag\\
     		&~\leq \bw_0(I+\eta A)^{2T}((1-\epsilon)I-A)\bw_0\notag\\
     		&~~~~~~+(\eta b)^2\sum_{k=0}^{T-1}\left((1+\eta)^2+(\eta b)^2\right)^k
     		\lambda_{\max}\left((I+\eta A)^{2(T-k-1)}((1-\epsilon)I-A)\right)\label{eq:bigsum_gap}
	\end{align}
	Recalling that $A=\text{diag}(s_1,\ldots,s_d)$ with $s_1=1$, that
	$\norm{\bw_0}^2=\sum_{j=1}^{d}w_{0,j}^2=1$, and that $w_{0,1}^2\geq \frac{1}{p}$, the first term in \eqref{eq:bigsum} equals
	\begin{align}
		\bw_0(I+\eta A)^{2T}((1-\epsilon)I-A)&\bw_0~= \sum_{j=1}^{d}(1+\eta s_j)^{2T}(1-\epsilon-s_j)w_{0,j}^2\notag\\
		&= (1+\eta)(-\epsilon)w_{0,1}^2+\sum_{j=2}^{d}(1+\eta s_j)^{2T}(1-\epsilon-s_j)w_{0,j}^2\notag\\
		&\leq -(1+\eta)^{2T}\frac{\epsilon}{p}+\max_{s\in [0,1-\lambda]}(1+\eta s)^{2T}(1-\epsilon-s)\notag\\
		&\leq -(1+\eta)^{2T}\frac{\epsilon}{p}+(1+\eta(1-\lambda))^{2T}\notag\\&\leq (1+\eta)^{2T}\left(-\frac{\epsilon}{p}+\left(1-\frac{\eta\lambda}{1+\eta}\right)^{2T}\right)\notag\\
		&\leq
		(1+\eta)^{2T}\left(-\frac{\epsilon}{p}+\left(1-\frac{\eta\lambda}{2}\right)^{2T}\right),\label{eq:sum1_gap}
	\end{align}
	where we used the assumption that $\eta\leq 1$. 
	As to the second term in \eqref{eq:bigsum_gap}, upper bounding it in exactly the same way as in the proof of \lemref{lem:wrw} (without using the eigengap assumption), we get an upper bound of 
	\[
	\eta b^2(1+\eta)^{2T}\left(1+(\eta b)^2\right)^T\left(\frac{1}{2}+
		4\log\left(\frac{2}{\eta\epsilon}\right)\right).
	\]
	Combining this with \eqref{eq:sum1_gap}, and plugging back to \eqref{eq:bigsum_gap}, we get that
	\begin{equation}\label{eq:vff_gap}
		\E[V_T]~\leq~
		(1+\eta)^{2T}\left(-\frac{\epsilon}{p}+\left(1-\frac{\eta\lambda}{2}\right)^{2T}+\eta b^2\left(1+(\eta b)^2\right)^T\left(\frac{1}{2}+
		4\log\left(\frac{2}{\eta\epsilon}\right)\right)\right).
	\end{equation}
	Picking $\eta=\frac{\log(T)}{\lambda T}$, and $\epsilon=\frac{c\log^2(T)b^2 p}{\lambda T}$ for some constant $c\geq 2$, the above equals
	\[
	(1+\eta)^{2T}\left(-\frac{c\log^2(T)b^2}{\lambda T}+\left(1-\frac{\log(T)}{2T}\right)^{2T}+\frac{b^2\log(T)}{\lambda T}\left(1+\frac{b^2\log^2(T)}{\lambda^2 T^2}\right)^T\left(\frac{1}{2}+
	4\log\left(\frac{2\lambda^2 T^2}{c\log^3(T)b^2 p}\right)\right)\right).
	\]
	Using the facts that $(1+a/t)^t\leq \exp(a)$ for all positive $t,a$, that $c\log^3(T)b^2 p\geq 2$, and that $\lambda\leq 1$, the above is at most
	\[
	(1+\eta)^{2T}\left(-\frac{c\log^2(T)b^2}{\lambda T}+\frac{1}{T}+\frac{b^2\log(T)}{\lambda T}\exp\left(\frac{b^2\log^2(T)}{\lambda^2 T}\right)\left(\frac{1}{2}+
	4\log\left(T^2\right)\right)\right).
	\]
	By \lemref{lem:regime_gap}, $\frac{b^2\log^2(T)}{\lambda^2 T}\leq 1$, so the above is at most
	\begin{align*}
	&(1+\eta)^{2T}\left(-\frac{c\log^2(T)b^2}{\lambda T}+\frac{1}{T}+\frac{b^2\log(T)}{\lambda T}\exp(1)\left(\frac{1}{2}+
	8\log\left(T\right)\right)\right)\\	
	&\leq (1+\eta)^{2T}\frac{b^2\log^2(T)}{\lambda T}\left(-c+\frac{\lambda}{b^2\log^2(T)}+\exp(1)\left(\frac{1}{2\log(T)}+8\right)\right).
	\end{align*}
	Clearly, for large enough $c$, the expression in the main parenthesis above is at most $-c/4$, so we get an upper bound of
	\[
	-(1+\eta)^{2T}\frac{cb^2\log^2(T)}{4\lambda T} ~=~
	-(1+\eta)^{2T}\frac{\epsilon}{4p},
	\]
	from which the result follows.
\end{proof}

Rather similar to the proof of \thmref{thm:main}, we now define the non-negative random variable 
\[
R_T = \max\left\{0,-\frac{V_T}{\exp((b^2+3)T\eta^2)(1+\eta)^{2T}\epsilon}\right\}.
\]
By \lemref{lem:wrw_gap}, 
\[
\E[R_T]~\geq~ \E\left[-\frac{V_T}{\exp((b^2+3)T\eta^2)(1+\eta)^{2T}\epsilon}\right]~\geq~ \frac{1}{4p\exp((b^2+3)T\eta^2)},
\]
and by \lemref{lem:tail},
\[
\Pr\left(R_T \geq \exp\left(\eta b\sqrt{T\log(1/\delta)}\right)\right)\leq \delta.
\]
Therefore, applying \lemref{lem:inverse} on $R_T$, with $\alpha=\frac{1}{4p\exp((b^2+3)T\eta^2)}$ (which is in $[0,1]$) and with $\beta=\eta b\sqrt{T}$ (which can be verified to be in $[0,1]$ by the fact that $\eta=\frac{\log(T)}{\lambda T}$ and \lemref{lem:regime_gap}), we get that
\begin{equation}
\Pr\left(R_T>\frac{1}{8p\exp((b^2+3)T\eta^2)}\right) ~\geq~ \frac{1}{15}\left(\frac{1}{4p\exp((b^2+3)T\eta^2)}-\exp\left(-\frac{2}{\eta^2 b^2 T}\right)\right)\label{eq:gapfinal}.
\end{equation}
By definition of $R_T$, the left hand side of this inequality is at most
\begin{align*}
	&=~\Pr\left(\max\left\{0,-\frac{V_T}{\exp((b^2+3)T\eta^2)(1+\eta)^{2T}\epsilon}\right\} > \frac{1}{8p\exp((b^2+3)T\eta^2)}\right)\\
	&=~\Pr\left(-\frac{V_T}{\exp((b^2+3)T\eta^2)(1+\eta)^{2T}\epsilon} > \frac{1}{8p\exp((b^2+3)T\eta^2)}\right)\\
	&=~\Pr\left(V_T\leq -\frac{(1+\eta)^{2T}\epsilon}{8p}\right)\\
	&\leq~\Pr\left(V_T\leq 0\right),
\end{align*}
and the right hand side of \eqref{eq:gapfinal} (by definition of $\eta$, the assumption $b\geq 1$, and \lemref{lem:regime_gap}) equals
\begin{align*}
&\frac{1}{15}\left(\frac{1}{4p\exp\left(\frac{(b^2+3)\log^2(T)}{\lambda^2 T}\right)}-\exp\left(-\frac{2\lambda^2T}{ b^2 \log^2(T)}\right)\right)\\
&\geq~\frac{1}{15}\left(\frac{1}{4p\exp\left(\frac{4b^2\log^2(T)}{\lambda^2 T}\right)}-\frac{1}{\exp\left(2\frac{\lambda^2T}{b^2\log^2(T)}\right)}\right)\\
&\geq~\frac{1}{15}\left(\frac{1}{4p\exp\left(\frac{4}{p}\right)}-\frac{1}{\exp\left(2p)\right)}\right),
\end{align*}
which can be verified to be at least $\frac{1}{100p}$ for any $p\geq 8$. Plugging these bounds back to \eqref{eq:gapfinal}, we obtained
\[
\Pr(V_T\leq 0)\geq \frac{1}{100 p}.
\]
By definition of $V_T$, $V_T\leq 0$ implies that
\[
\frac{\bw_T(I-A)\bw_T}{\norm{\bw_T}^2}\leq \epsilon,
\]
where $\epsilon = c\frac{\log^2(T)b^2p}{\lambda T}$ is the value chosen in
\lemref{lem:wrw_gap}, and the theorem is established.

\subsubsection*{Acknowledgments}
This research is supported in part by an FP7 Marie Curie CIG grant, the Intel
ICRI-CI Institute, and Israel Science Foundation grant 425/13. We thank Ofer Zeitouni for several illuminating discussions.

\bibliographystyle{plain}
\bibliography{mybib}

\end{document}